\theoremstyle{definition}
\setlist[enumerate]{leftmargin=0.5cm,topsep=0pt,itemsep=-2pt}
\setlist[itemize]{leftmargin=0.5cm,topsep=0pt,itemsep=-2pt}
\newcommand{\probspace}{\mathscr{P}}
\DeclareMathOperator{\sg}{sg}
\DeclareMathOperator{\diag}{diag}
\DeclareMathOperator{\spanop}{span}
\Crefname{assumption}{Assumption}{Assumptions}
\Crefname{equation}{Equation}{Equations}
\icmltitlerunning{Understanding Self-Predictive Learning for Reinforcement Learning}
\begin{document}

\twocolumn[
\icmltitle{Understanding Self-Predictive Learning for Reinforcement Learning}

\icmlsetsymbol{equal}{*}

\begin{icmlauthorlist}
\icmlauthor{Yunhao Tang}{dm}
\icmlauthor{Zhaohan Daniel Guo}{dm}
\icmlauthor{Pierre Harvey Richemond}{dm}
\icmlauthor{Bernardo \'Avila Pires}{dm}
\icmlauthor{Yash Chandak}{su}
\icmlauthor{R\'emi Munos}{dm}
\icmlauthor{Mark Rowland}{dm}
\icmlauthor{Mohammad Gheshlaghi Azar}{dm}
\icmlauthor{Charline Le Lan}{ox}
\icmlauthor{Clare Lyle}{dm}
\icmlauthor{Andr\'as György}{dm}
\icmlauthor{Shantanu Thakoor}{dm}
\icmlauthor{Will Dabney}{dm}
\icmlauthor{Bilal Piot}{dm}
\icmlauthor{Daniele Calandriello}{dm}
\icmlauthor{Michal Valko}{dm}
\end{icmlauthorlist}

\icmlaffiliation{dm}{DeepMind}
\icmlaffiliation{su}{Stanford University}
\icmlaffiliation{ox}{University of Oxford}

\icmlcorrespondingauthor{Yunhao Tang}{robintyh@deepmind.com}

\icmlkeywords{Machine Learning, ICML}

\vskip 0.3in
]

\printAffiliationsAndNotice{\icmlEqualContribution} 

\begin{abstract}
    We study the learning dynamics of self-predictive learning for reinforcement learning, a family of algorithms that learn representations by minimizing the prediction error of their own future latent representations.
    Despite its recent empirical success, such algorithms have an apparent defect: trivial representations (such as constants) minimize the prediction error, yet it is obviously undesirable to converge to such solutions.
    Our central insight is that careful designs of the optimization dynamics are critical to learning meaningful representations. We identify that a faster paced optimization of the predictor and semi-gradient updates on the representation, are crucial to preventing the representation collapse. Then in an idealized setup, we show self-predictive learning dynamics carries out spectral decomposition on the state transition matrix, effectively capturing information of the transition dynamics. Building on the theoretical insights, we propose bidirectional self-predictive learning, a novel self-predictive algorithm that learns two representations simultaneously. We examine the robustness of our theoretical insights with a number of small-scale experiments and showcase the promise of the novel representation learning algorithm with large-scale experiments.
\end{abstract}

\section{Introduction}

Self-prediction is one of the fundamental concepts in reinforcement learning (RL). In value-based RL, temporal difference (TD) learning \citep{sutton1988learning} uses the value function prediction at the next time step as the prediction target for the current time step $V(x_t)\leftarrow R(x_t) + \gamma V(x_{t+1})$, a procedure also known as \emph{bootstrapping}. We can understand TD-learning as self-prediction specialized to value learning, where the value function makes predictions about targets constructed from itself.

Recently, the idea of self-prediction has been extended to representation learning with much empirical success~\citep{schwarzer2021dataefficient,guo2020bootstrap,guo2022byol}. In self-predictive learning, the aim is to learn a representation $\Phi$ jointly with a transition function $P$ which models the transition of representations in the latent space $\Phi(x_t)\rightarrow\Phi(x_{t+1})$, by minimizing the prediction error 
\begin{align*}
    \left\lVert P\left(\Phi(x_t)\right)-  \Phi(x_{t+1}) \right\rVert_2^2.
\end{align*}
Intuitively, minimizing the prediction error should encourage the algorithm to learn a compressed latent representation $\Phi(x)$ of the state $x$. However, despite the intuitive construct of the prediction error, there is no obvious theoretical justification why minimizing the error leads to meaningful representations at all. Indeed, the \emph{trivial} solution $\Phi(x_t)\equiv c$ for any constant vector $c$ minimizes the error but retains no information at all. Comparing self-predictive learning with value learning, the key difference lies in that the value function is \emph{grounded} in the immediate reward $R(x_t)$. In contrast, the prediction error that motivates self-predictive learning is apparently not grounded in concrete quantities in the environment. 

A number of natural questions ensue: how do we reconcile the apparent defect of the prediction error objective, with the empirical success of practical algorithms built on such an objective? What are the representations obtained by self-predictive learning, and are they useful for downstream RL? With obvious theory-practice conflicts in place, it is difficult to establish self-predictive learning as a principled approach to representation learning in general.

We present the first attempt at understanding self-predictive learning for RL, through a theoretical lens. In an idealized setting, we identify key elements to ensure that the self-predictive algorithm avoids collapse and learns meaningful representations. We make the following theoretical and algorithmic contributions.
\paragraph{Key algorithmic elements to prevent collapse.} We identify two key algorithmic components: (1) the two time-scale optimization of the transition function $P$ and representation $\Phi$; and (2) the semi-gradient update on $\Phi$, to ensure that the representation maintains its capacity throughout learning (\cref{sec:dynamics}). As a result, self-predictive learning dynamics does not converge to trivial solutions starting from random initializations.

\paragraph{Self-prediction as spectral decomposition.} With a few idealized assumptions in place, we show that the learning dynamics locally improves upon a trace objective that characterizes the information that the representations capture about the transition dynamics (\cref{sec:dynamics}). Maximizing this objective corresponds to spectral decomposition on the state transition matrix. This provides a partial theoretical understanding as to why self-predictive learning proves highly useful in practice.

\paragraph{Bidirectional self-predictive learning.} Based on the theoretical insights, we derive a novel self-predictive learning algorithm: bidirectional self-predictive learning (\cref{sec:double}). The new algorithm learns two representations simultaneously, based on both a forward prediction and a backward prediction. Bidrectional self-predictive learning enjoys more general theoretical guarantees compared to self-predictive learning, and obtains more consistent and stable performance as we validate both on tabular and deep RL experiments (\cref{sec:exp}).
 
\section{Background}

Consider a reward-free Markov decision process (MDP) represented as the tuple $\left(\mathcal{X},\mathcal{A},p,\gamma\right)$ where $\mathcal{X}$ is a finite state space, $\mathcal{A}$ the finite action space,  $p:\mathcal{X}\times\mathcal{A}\rightarrow\probspace(\mathcal{X})$ the transition kernel  and $\gamma\in [0,1)$ the discount factor. Let $\pi:\mathcal{X}\rightarrow\probspace(\mathcal{A})$ be a fixed policy. For convenience, let $P^\pi:\mathcal{X}\rightarrow\probspace(\mathcal{X})$ be the state transition kernel induced by the policy $\pi$.  We focus on reward-free MDPs instead of regular MDPs because we do not need reward functions for the rest of the discussion. 

Throughout, we assume tabular state representation where each state $x\in\mathcal{X}$ is equivalently encoded as a one-hot vector $x\in \mathbb{R}^\mathcal{X}$. This representation will be critical in establishing results that follow. In general, a representation matrix $\Phi\in\mathbb{R}^{|\mathcal{X}| \times k}$ embeds each state $x\in\mathcal{X}$ as a $k$-dimensional real vector $\Phi^T x\in\mathbb{R}^k$. In practice, we tend to have $k\ll|\mathcal{X}|$ where $|\mathcal{X}|$ is the cardinal of $\mathcal{X}$. The representation is generally shaped by learning signals such as TD-learning or auxiliary objectives. Good representations should entail sharing information between states, and facilitate downstream tasks such as policy evaluation or control. 

\begin{figure}[t]
    \centering
    \includegraphics[keepaspectratio,width=.45\textwidth]{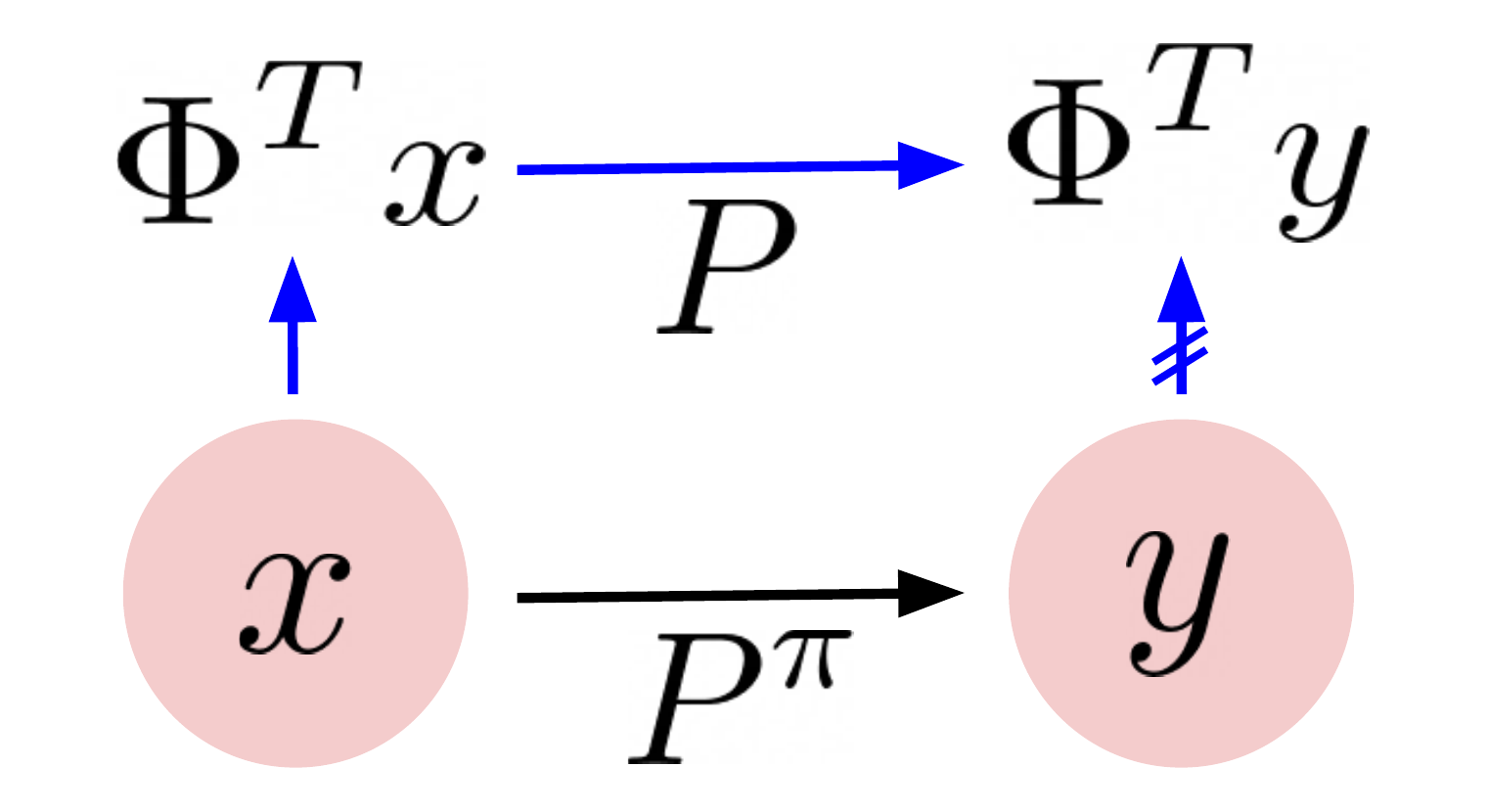}
    \caption{A diagram that outlines the conceptual components of self-predictive learning. The black arrow indicates sampling the transition $y\sim P^\pi(\cdot|x)$. the blue arrow indicates algorithmic components of self-predictive learning: predicting the next state representation $\Phi^Ty$ from the first state representation $\Phi^Tx$ using prediction matrix $P$. In practice as shown in \cref{eq:phi-update}, self-predictive learning stops  the gradient on the prediction target.} 
    \label{fig:self-predictive}
\end{figure}

\subsection{Self-predictive learning}

We introduce a mathematical framework for analyzing self-predictive learning, which seeks to capture the high level properties of its various algorithmic instantiations~\citep{schwarzer2021dataefficient,guo2020bootstrap,guo2022byol}. Throughout, assume we have access to state tuples $x,y\in\mathcal{X}$ sampled sequentially as follows,
\begin{align*}
    x\sim d, y \sim P^\pi(\cdot|x),
\end{align*}
where we use $x\sim d$ to denote sampling the state from a distribution defined by the probability vector $d\in\mathbb{R}^{|\mathcal{X}|}$. 
To model the transition in the representation space $\Phi^Tx\rightarrow \Phi^Ty$, we define $P\in\mathbb{R}^{k\times k}$ as the latent prediction matrix. The predicted latent at the next time step from $\Phi^Tx$ is $P^T\Phi^Tx$. The  goal is to minimize the reconstruction loss in the latent space,
\begin{align}
  \min_{\Phi,P} L(\Phi,P) \coloneqq  &\mathbb{E}_{x\sim d, y \sim P^\pi(\cdot|x)} \left[\left\lVert P^T \Phi^Tx - \Phi^Ty \right\rVert_2^2\right] \label{eq:latent-reconstruction-loss}.
\end{align}
As alluded to earlier, naively optimizing \cref{eq:latent-reconstruction-loss} may lead to trivial solutions such as $\Phi^\ast=0$, which also produces the optimal objective $L(\Phi^\ast,P)=0$. We next discuss how specific optimization procedures entail learning meaningful representation $\Phi$ and prediction function $P$.

\section{Understanding learning dynamics of self-predictive learning}
\label{sec:dynamics}

Assume the algorithm proceeds in discrete iterations $t\geq 0$.
In practice, the update of $\Phi$ follows a semi-gradient update through $L(\Phi,P)$ by stopping the gradient via the prediction target $\Phi^Ty$ 
\begin{align}
    \Phi_{t+1}\leftarrow\Phi_t -\eta \nabla_{\Phi_t} \mathbb{E} \left[\left\lVert P_t^T \Phi_t^Tx - \sg\left(\Phi_t^Ty\right) \right\rVert_2^2\right],\label{eq:phi-update}
\end{align}
where $\sg$ stands for stop-gradient and $\eta>0$ is a fixed learning rate. In the expectation, $x\sim d, y \sim P^\pi(\cdot|x)$ unless otherwise stated. 

\subsection{Non-collapse property of self-predictive learning}

\begin{figure}[t]
    \centering
    \includegraphics[keepaspectratio,width=.45\textwidth]{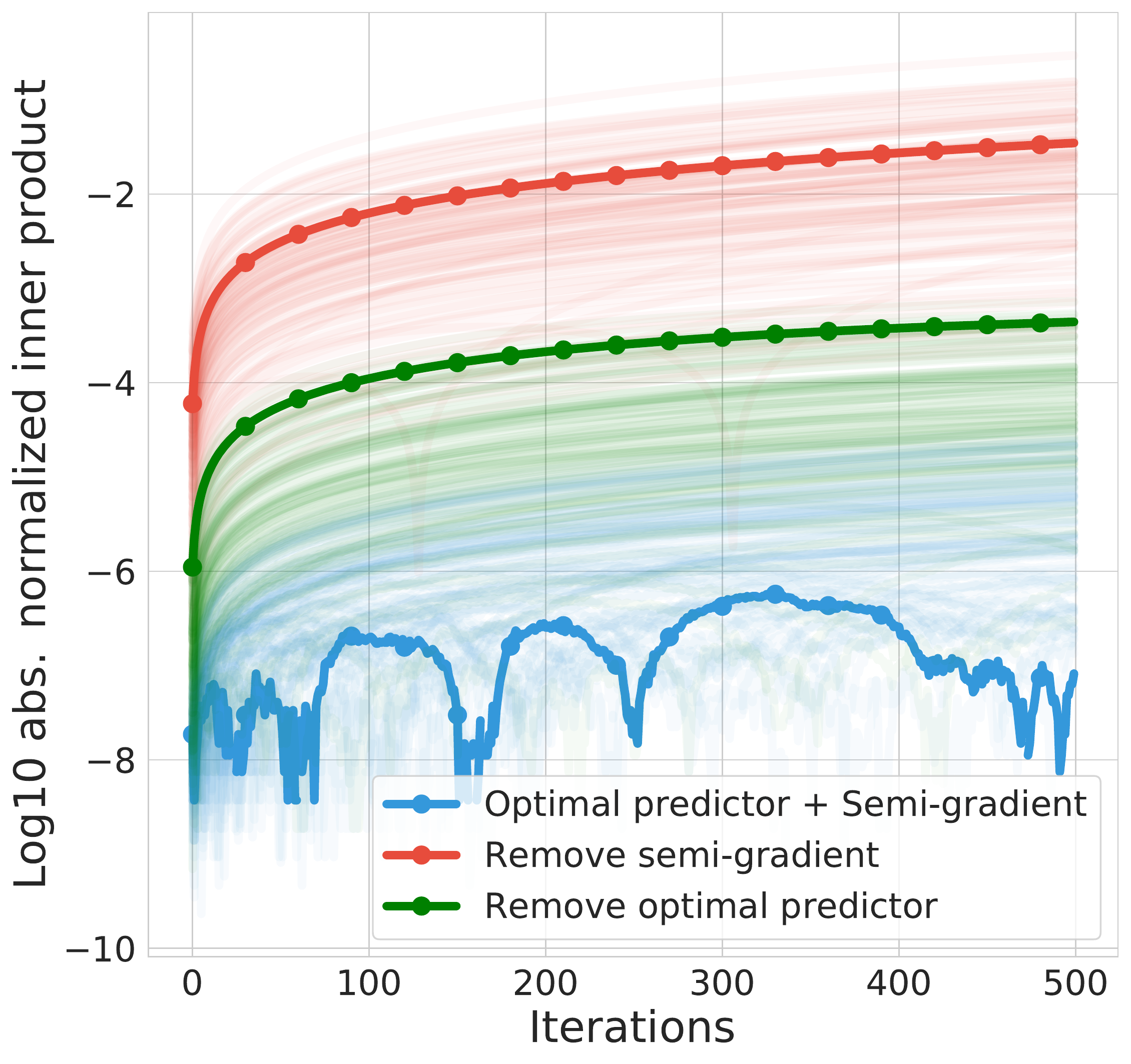}
    \caption{
    Absolute value of the inner product between the two (normalized) columns of $\Phi$, versus the number of iterations, for different variants of \cref{algo:byoldynamics}.
    Each light curve corresponds to one of $100$ independent runs over randomly generated MDPs, and the solid curve shows the median over runs. The experiments are based on the discretized dynamics (\cref{eq:phi-update}) with a small but finite learning rate.
    }
    \label{fig:collapse}
\end{figure}

We identified a key condition to ensure that the solution does not collapse to trivial solutions, that $P_t$ be optimized at a faster pace than $\Phi_t$. Intriguingly, this condition is compatible with a number of empirical observations made in prior work (e.g., Appendix I of \citealp{grill2020bootstrap} and \citealp{chen2021exploring} have both identified the importance of near optimal predictors). More formally, the prediction function $P_t$ is computed as one optimal solution to the loss function, fixing the representation $\Phi_t$,
\begin{align}
    P_t\in\arg\min_P \mathbb{E} \left[\left\lVert P^T \Phi_t^Tx - \Phi_t^Ty \right\rVert_2^2\right].\label{eq:P-update}
\end{align}
In practice, the above assumption might be approximately satisfied by the fact that the prediction function $P_t$ is often parameterized by a smaller neural network (e.g., an MLP or LSTM) compared to the representation $\Phi_t$ \citep[e.g., a deep ResNet,][]{schwarzer2021dataefficient,guo2020bootstrap,guo2022byol}, and hence can be optimized at a faster pace even with gradient descent. The pseudocode for such a self-predictive learning algorithm is in \cref{algo:byoldynamics}.

To understand the behavior of the joint updates in \cref{eq:phi-update,eq:P-update}, we propose to consider the behavior of the corresponding continuous time system. Let $t\geq 0$ be the continuous time index, the ordinary differential equation (ODE) systems jointly for $(\Phi_t,P_t)$ is
\begin{align}
\begin{split} \label{eq:ode}
   P_t &\in \arg\min_P L(\Phi_t,P),  \\ \dot{\Phi}_t &= - \nabla_{\Phi_t} \mathbb{E} \left[\left\lVert P_t^T \Phi_t^Tx - \sg\left(\Phi_t^Ty\right) \right\rVert_2^2\right].
\end{split}
\end{align}

Our theoretical analysis consists in understanding the behavior of the above ODE system.
The key result shows that the learning dynamics in \cref{eq:ode} does not lead to collapsed solutions.
\begin{restatable}{theorem}{theoremnocollapse}\label{theorem:nocollapse} Under the dynamics in \cref{eq:ode}, the covariance matrix $\Phi_t^T\Phi_t \in\mathbb{R}^{k\times k}$ is constant over time.
\end{restatable}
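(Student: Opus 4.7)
The plan is to reduce the constancy of $\Phi_t^T \Phi_t$ to the first-order optimality condition for the inner-loop predictor $P_t$. First I would write both sides of \cref{eq:ode} in closed matrix form. Let $D = \diag(d)$ and let $P^\pi \in \mathbb{R}^{|\mathcal{X}| \times |\mathcal{X}|}$ denote the state transition matrix induced by $\pi$. Since $x$ and $y$ are one-hot encoded, $\mathbb{E}[xx^T] = D$ and $\mathbb{E}[xy^T] = D P^\pi$. Because the semi-gradient in \cref{eq:ode} differentiates only the prediction $P_t^T \Phi_t^T x$ and leaves the target $\Phi_t^T y$ fixed, a direct expansion yields
\begin{align*}
\dot{\Phi}_t = -2\, D (\Phi_t P_t - P^\pi \Phi_t)\, P_t^T.
\end{align*}
In parallel, differentiating $L(\Phi_t, P)$ with respect to $P$ and setting the gradient to zero gives the first-order condition at the inner-loop optimum,
\begin{align*}
\Phi_t^T D (\Phi_t P_t - P^\pi \Phi_t) = 0.
\end{align*}

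Next I would compute the time derivative of the covariance. Expanding,
\begin{align*}
\frac{d}{dt}\bigl(\Phi_t^T \Phi_t\bigr) = \dot{\Phi}_t^T \Phi_t + \Phi_t^T \dot{\Phi}_t,
\end{align*}
the second summand equals $-2\, \Phi_t^T D (\Phi_t P_t - P^\pi \Phi_t)\, P_t^T$, which vanishes because the bracket to the left of $P_t^T$ is the zero matrix by the stationarity condition. The first summand is the transpose of the second, so it also vanishes. Hence $\Phi_t^T \Phi_t$ is constant along the flow, as claimed.

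The only real subtlety, and the reason both algorithmic ingredients emphasized in \cref{sec:dynamics} appear essential, is the precise algebraic matching between two factorizations of the residual $\Phi_t P_t - P^\pi \Phi_t$: the semi-gradient update on $\Phi$ contributes a factor of $P_t^T$ on its right, while optimality of $P_t$ provides a factor of $\Phi_t^T D$ on its left. Only when both hold simultaneously does $\Phi_t^T \dot{\Phi}_t$ collapse to zero. Dropping the stop-gradient would introduce an extra term from differentiating $\Phi_t^T y$, and using a sub-optimal $P_t$ would leave a nonzero residual; in either case the covariance would drift, and in particular could collapse toward zero. This observation is the conceptual heart of the argument, while the matrix manipulations themselves are routine.
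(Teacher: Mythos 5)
Your proof is correct and follows essentially the same route as the paper: both reduce the claim to showing $\Phi_t^T \dot{\Phi}_t = 0$ by combining the first-order optimality condition for $P_t$ (which supplies a $\Phi_t^T D$ factor annihilating the residual) with the structure of the semi-gradient on $\Phi_t$ (which is the same residual post-multiplied by $P_t^T$). The only difference is cosmetic: the paper argues abstractly through $\partial_{A_t} L$ with $A_t = \Phi_t P_t$, whereas you compute the explicit form $\dot{\Phi}_t = -2D(\Phi_t P_t - P^\pi\Phi_t)P_t^T$ for the squared loss, which is exactly what the paper does in its ODE derivation appendix.
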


\begin{algorithm}[t]
\begin{algorithmic}
\STATE Representation matrix $\Phi_0\in\mathbb{R}^{|\mathcal{X}| \times k}$ for $k\leq |\mathcal{X}|$
\FOR{$t=1,2...T$}
\STATE Compute prediction matrix $P_t$ based on \cref{eq:P-update}.
\STATE Update representation $\Phi_t$ based on \cref{eq:phi-update}.
\ENDFOR
\STATE  Output final representation $\Phi_T$.
\caption{Self-predictive learning.\label{algo:byoldynamics}}
\end{algorithmic}
\end{algorithm}

The representation matrix decomposes into $k$ representation vectors $\Phi_t=[\phi_{1,t}\,\cdots\,\phi_{k,t}]$, where $\phi_{i,t} \in \mathbb{R}^\mathcal{X}$ for each $i=1,\ldots,k$.
Geometrically, we can visualize $(\phi_{i,t})_{i=1}^k$ as forming a basis of a $k$-dimensional subspace of $\mathbb{R}^{\mathcal{X}}$.
Throughout the learning process, all basis vectors rotate in the same direction, keeping the relative angles between basis vectors and their lengths unchanged. As a direct implication of the \emph{rotation} dynamics, it is not possible for $(\phi_{i,t})_{i=1}^k$ to start as different vectors but then converge to the same vector.
That is, the representation cannot collapse.

\begin{restatable}{corollary}{corollarynocollapse}\label{corollary:nocollapse} Under the dynamics in \cref{eq:ode}, the representation vectors $(\phi_{i,t})_{i=1}^k$ cannot converge to the same vector if they are initialized differently.
\end{restatable}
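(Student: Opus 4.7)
The plan is to prove the contrapositive: if the representation vectors $\phi_{i,t}$ all converge to a common limit as $t\to\infty$, then they must have been equal at initialization. The single nontrivial ingredient will be \cref{theorem:nocollapse}, which tells us that the Gram matrix $\Phi_t^T\Phi_t$ is a conserved quantity along the flow in \cref{eq:ode}. Everything else is continuity plus equality in Cauchy--Schwarz.

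First, I would assume $\phi_{i,t}\to\phi$ as $t\to\infty$ for every $i=1,\ldots,k$, for some common vector $\phi\in\mathbb{R}^{\mathcal{X}}$. Since the inner product is continuous, this implies that for all $i,j$,
\begin{align*}
(\Phi_t^T\Phi_t)_{ij} = \phi_{i,t}^T\phi_{j,t} \; \longrightarrow \; \phi^T\phi = \lVert\phi\rVert_2^2,
\end{align*}
so $\Phi_t^T\Phi_t$ converges to the rank-one matrix $\lVert\phi\rVert_2^2\,\mathbf{1}\mathbf{1}^T$. Invoking \cref{theorem:nocollapse}, the quantity $\Phi_t^T\Phi_t$ is constant along the flow and therefore equal to this limit for every $t$; specializing to $t=0$ gives the system $\phi_{i,0}^T\phi_{j,0} = \lVert\phi\rVert_2^2$ for all $i,j$.

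Next, I would split into two cases to extract the desired conclusion from this system. If $\phi=0$, then the diagonal entries $\lVert\phi_{i,0}\rVert_2^2 = 0$ force $\phi_{i,0}=0$ for every $i$, so the initial vectors coincide. Otherwise $\phi\neq 0$; the diagonal gives $\lVert\phi_{i,0}\rVert_2 = \lVert\phi\rVert_2$ for each $i$, while the off-diagonal reads $\phi_{i,0}^T\phi_{j,0} = \lVert\phi_{i,0}\rVert_2\lVert\phi_{j,0}\rVert_2$. This is the equality case of Cauchy--Schwarz with a strictly positive value, so $\phi_{i,0}$ and $\phi_{j,0}$ are positive scalar multiples of one another, and having equal norms forces $\phi_{i,0}=\phi_{j,0}$. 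In both cases, the initialization must already have been collapsed, contradicting the hypothesis that the $\phi_{i,0}$ were distinct.

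The main obstacle is essentially pedagogical rather than technical: one must recognize that a common limit forces the Gram matrix to take precisely the rank-one form $\lVert\phi\rVert_2^2\,\mathbf{1}\mathbf{1}^T$, which is rigid enough — via Cauchy--Schwarz equality — to reverse-engineer the initial configuration. Once \cref{theorem:nocollapse} is invoked to freeze the Gram matrix in time, the remainder is elementary linear algebra, with the only subtlety being the separate treatment of the degenerate case $\phi=0$ where Cauchy--Schwarz is vacuous.
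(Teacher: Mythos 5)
Your proof is correct and rests on the same key fact as the paper's: \cref{theorem:nocollapse} conserves the Gram matrix $\Phi_t^T\Phi_t$, so pairwise inner products and norms are frozen in time. The paper argues directly and informally that the cosine similarity between two columns is therefore constant and hence cannot become $1$; your contrapositive version is somewhat more careful, since it explicitly handles the degenerate limit $\phi=0$ and uses the Cauchy--Schwarz equality case together with equality of norms to conclude $\phi_{i,0}=\phi_{j,0}$. In particular your argument correctly covers the case where two initial columns are distinct but parallel (e.g.\ one a positive scalar multiple of the other), where the paper's appeal to ``cosine similarity $\neq 1$'' is slightly imprecise; in that case the conserved (and unequal) norms are what rule out a common limit, which your argument captures.
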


With the non-collapse behavior established under the dynamics in \cref{eq:ode}, we ask in hindsight what elements of the algorithm entail such a property. In addition to the faster paced optimization of the prediction matrix $P_t$, the semi-gradient update to $\Phi_t$ is also indispensable. Our result above provides a first theoretical justification to the ``latent bootstrapping''
technique in the RL case, which has been effective in empirical studies  \citep{grill2020bootstrap,schwarzer2021dataefficient,guo2020bootstrap}. See \cref{sec:discussion} for more discussions on the relation between our analysis and prior work in the non-contrastive unsupervised learning algorithms.

We illustrate the importance of the optimality of $P_t$ and the semi-gradient update with an empirical study.
We learned representations with $k=2$ vectors on randomly generated MDPs using three baselines: (1) using semi-gradient updates on $\Phi_t$ and with optimal predictors $P_t$, which strictly adheres to \cref{algo:byoldynamics}; (2) using the optimal predictors $P_t$ but replacing the semi-gradient update by a full gradient on $\Phi_t$, i.e., allowing gradients to flow into $\Phi^T y$; (3) using the semi-gradient update but corrupting the optimal predictor with some zero-mean noise at each iteration.

\Cref{fig:collapse} shows the absolute value of the inner product between the two (normalized) columns of $\Phi$, also known as the cosine similarity, versus the number of iterations.

The matrix $\Phi$ is initialized to be orthogonal, so we expect the two columns in $\Phi$ to remain close to orthogonal throughout the learning dynamics (\cref{eq:phi-update}) with a small but finite learning rate $\eta$.
Therefore, the larger values the curves take in \cref{fig:collapse}, the stronger the evidence of collapse, and
indeed as we claimed, when either semi-gradient or optimal predictor are removed from the learning algorithm, the representation columns start to collapse. In practice, having an optimal predictor is a stringent requirement; we carry out more extensive ablation study in \cref{sec:exp}. See \cref{appendix:exp} for more details on the tabular experiments.

\paragraph{Non-collapse property for a general loss function.} To make our analysis simple, we focused on the squared loss function between the prediction $P^T\Phi^Tx$ and target $\Phi^Ty$. We note that the non-collapse property in \cref{theorem:eigen} holds more generally for losses of the form $L\left(\Phi,P\right)$.
Our result also applies to a slightly modified variant of the cosine similarity loss, which is more commonly used in practice \citep{grill2020bootstrap,chen2021exploring,schwarzer2021dataefficient,guo2022byol}. 
See \cref{appendix:loss} for more details.

The non-collapse property shows that the covariance matrix $\Phi_t^T\Phi_t$, which measures the level of diversity across representation vectors $(\phi_{i,t})_{i=1}^t$ is conserved.
In the next section, we will discuss the connection between the learning dynamics and spectral decomposition on the transition matrix $P^\pi$. To facilitate the discussion, we make an idealized assumption that the representation columns are initialized orthonormal. 

\begin{restatable}{assumption}{assumptioninit}\label{assumption:init} (\textbf{Orthonormal Initialization}) The representations are initialized orthonormal: $\Phi_0^T\Phi_0=I_{k\times k}$.
\end{restatable}

Note that the assumption is approximately valid e.g., when entries of $\Phi_0$ are sampled i.i.d.~from an isotropic distribution and properly scaled, and when the state space is large relative to the representation dimension $k\ll|\mathcal{X}|$ (see, e.g., \citep{gonzalez2018euclidean} as a related reference). This is the case if the representation capacity is smaller than the state space (e.g., small network vs. complex image observation).

\subsection{Self-predictive learning as eigenvector decomposition}

For simplicity, henceforth, we assume a uniform distribution over the first-state. This assumption is made implicitly in a number of prior work on TD-learning or representation learning for RL \citep{parr2008analysis,song2016linear,behzadian2019fast,lyle2021effect}.

\begin{restatable}{assumption}{assumptionD}\label{assumption:D} (\textbf{Uniform distribution}) The first-state distribution is uniform:  $d=|\mathcal{X}|^{-1}1_{|\mathcal{X}|}$.
\end{restatable}

For the rest of the paper, we always assume \cref{assumption:init} and  \cref{assumption:D} to hold. As a result, the learning dynamics in \cref{eq:ode} reduces to the following:
\begin{align}
    P_t = \Phi_t^T P^\pi\Phi_t,  \ \ \dot{\Phi}_t = \left(I-\Phi_t\Phi_t^T\right) P^\pi \Phi_t (P_t)^T \label{eq:uniformD-ode}
\end{align}
We provide detailed derivations of the ODE in \cref{appendix:ode}.

\paragraph{Remarks.} As a sanity check and a special case, assume the representation matrix is the identity $\Phi_t=I$, in which case $\Phi_t^Tx=x$ recovers the tabular representation. In this case we have $P_t=P^\pi$ and the latent prediction recovers the original state transition matrix. 

A useful property of the above dynamical system is the set of critical points where $\dot{\Phi}_t=0$.  Below we provide a characterization of such critical points.

\begin{restatable}{lemma}{lemmacritical}\label{lemmacritical} Assume $P^\pi$ is real diagonalizable and let  $(u_i)_{i=1}^{|\mathcal{X}|}$ be its set of $|\mathcal{X}|$ distinct eigenvectors. Let $\mathcal{C}_{P^\pi}$ be the set of critical points of \cref{eq:uniformD-ode}. 
Then $\mathcal{C}_{P^\pi}$ contains all matrices whose columns are orthonormal, and have the same span as a set of $k$ eigenvectors.
\end{restatable}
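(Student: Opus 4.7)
The plan is to substitute the optimal predictor $P_t = \Phi_t^T P^\pi \Phi_t$ into the drift term of \cref{eq:uniformD-ode} and then argue that a projection operator annihilates the remaining factor whenever $\Phi$'s column space is $P^\pi$-invariant. Concretely, after substitution the drift becomes
\begin{equation*}
    \dot{\Phi} \;=\; (I - \Phi\Phi^T)\, P^\pi \Phi\, \Phi^T (P^\pi)^T \Phi,
\end{equation*}
so it suffices to show that $(I - \Phi\Phi^T) P^\pi \Phi = 0$ at the claimed critical points; the rightmost factor $\Phi^T (P^\pi)^T \Phi$ can be arbitrary.

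The key linear-algebra step is to identify when $(I - \Phi\Phi^T) P^\pi \Phi$ vanishes. First I would note that if $\Phi$ has orthonormal columns, then $\Phi\Phi^T$ is the orthogonal projector onto the column span $V \coloneqq \operatorname{col}(\Phi)$, and $I - \Phi\Phi^T$ is the projector onto $V^\perp$. Hence $(I - \Phi\Phi^T) P^\pi \Phi = 0$ is equivalent to $P^\pi V \subseteq V$, i.e., $V$ being a $P^\pi$-invariant subspace.

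Next I would invoke the assumption that $\Phi$'s columns span the same subspace as some set $\{u_{i_1},\ldots,u_{i_k}\}$ of eigenvectors of $P^\pi$. Since each $u_{i_j}$ satisfies $P^\pi u_{i_j} = \lambda_{i_j} u_{i_j} \in V$, linearity implies $P^\pi V \subseteq V$, so $V$ is $P^\pi$-invariant. Equivalently, there exists some $M \in \mathbb{R}^{k\times k}$ with $P^\pi \Phi = \Phi M$ (one can even take $M$ to be the diagonal matrix $\operatorname{diag}(\lambda_{i_1},\ldots,\lambda_{i_k})$ when the columns of $\Phi$ are chosen to be the eigenvectors themselves, but this stronger form is not needed). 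Then
\begin{equation*}
    (I - \Phi\Phi^T) P^\pi \Phi \;=\; (I - \Phi\Phi^T)\Phi M \;=\; (\Phi - \Phi (\Phi^T\Phi)) M \;=\; 0,
\end{equation*}
using $\Phi^T\Phi = I_{k\times k}$ from orthonormality. Therefore $\dot{\Phi} = 0$ and such $\Phi$ lies in $\mathcal{C}_{P^\pi}$.

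I do not expect a real obstacle: the argument is a short projection computation, and the diagonalizability hypothesis is used only to guarantee a supply of eigenvectors from which to pick $k$ and to make the span statement meaningful (distinct eigenvectors form a basis). The one thing to be mindful of is clean bookkeeping of where orthonormality is used (to get both the projection interpretation of $I - \Phi\Phi^T$ and the cancellation $\Phi^T\Phi = I$), and that nothing in the argument requires the chosen eigenvectors to correspond to distinct eigenvalues.
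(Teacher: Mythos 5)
Your proof is correct, and it takes the same basic approach as the paper's (substitute the optimal $P_t$ into the drift and verify it vanishes), but your formulation via $P^\pi$-invariant subspaces is actually cleaner and more robust than what the paper writes. The paper's argument sets $\Phi_t = U$, uses $P^\pi U = U\Lambda$, and then cancels $(I-UU^T)U = 0$; this tacitly requires $U^T U = I$, i.e., that the chosen eigenvectors themselves are orthonormal. For a real diagonalizable but non-normal $P^\pi$ this need not hold, and the subsequent step --- writing an arbitrary orthonormal basis of $\operatorname{span}(U)$ as $U' = UQ$ with $Q$ orthogonal --- is likewise only valid when $U$ is already orthonormal. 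Your version avoids both issues: you never ask the eigenvectors to be orthonormal, only that $\operatorname{col}(\Phi)$ equals their span, and you extract the needed identity $P^\pi\Phi = \Phi M$ directly from $P^\pi$-invariance of that span. The cancellation $(I-\Phi\Phi^T)\Phi M = 0$ then uses orthonormality of $\Phi$'s columns, which is exactly the hypothesis of the lemma (and, via \cref{theorem:nocollapse} and \cref{assumption:init}, the regime in which the ODE \cref{eq:uniformD-ode} is derived). One small optional tidying: since $(I-\Phi\Phi^T)P^\pi\Phi = 0$ already gives $\dot\Phi = 0$ regardless of $(P_t)^T$, you do not actually need to expand $(P_t)^T = \Phi^T(P^\pi)^T\Phi$; stating that the left factor annihilates the whole drift is enough.
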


\paragraph{Remarks on the critical points.}
\cref{lemmacritical} implies that when $P^\pi$ only has eigenvectors, any matrix consisting of a subset of $k$ eigenvectors $(u_{i_j})_{j=1}^k$ (as well as any set of $k$ orthonormal columns with the same span) is a critical point to the self-predictive dynamics in \cref{eq:uniformD-ode}. However, this does not mean that $\mathcal{C}_{P^\pi}$ only consists of such critical points. As a simple example, consider the transition matrix
\begin{align} \label{eq:example1}
   P^\pi = 
\begin{bmatrix}
    0.1 & 0.9 \\
    0.9 & 0.1 \\
\end{bmatrix}
\end{align}
and when $k=1$, in which case $\Phi_t$ is a $2$-d vector. In addition to the two eigenvectors of $P^\pi$, there are at least four other non-eigenvector critical points (shown in \cref{fig:criticalpoints}). See  \cref{appendix:critical} for more detailed derivations. We leave a more comprehensive study of such critical points in the general case to future work. When $P^\pi$ has complex eigenvectors, the structure of the critical points to \cref{eq:uniformD-ode} also becomes more complicated. 

Importantly, under the assumption in \cref{lemmacritical}, not all critical points are equally informative. Arguably, the top $k$ eigenvectors of $P^\pi$ with the largest absolute valued eigenvalues, should contain the most information about the matrix because they reflect the high variance directions in the one-step transition. This is the motivation behind compression algorithms such as PCA. We now show that when $P^\pi$ is symmetric, intriguingly, the learning dynamics maximizes a trace objective that measures the variance information contained in $P^\pi$, and that this objective is maximized by the top $k$ eigenvectors.

\begin{restatable}{theorem}{theoremeigen}\label{theorem:eigen} If $P^\pi$ is symmetric, then under \cref{assumption:init} and learning dynamics  \cref{eq:uniformD-ode}, the trace objective is non-decreasing $\dot{f}\geq 0$, where
\begin{align*}
    f\left(\Phi_t\right) \coloneqq \text{Trace}\left(\left(\Phi_t^T P^\pi \Phi_t\right)^T\left(\Phi_t^T P^\pi \Phi_t\right)\right).
\end{align*}
If $\Phi_t\notin\mathcal{C}_{P^\pi}$, then $\dot{f}>0$.
Under the constraint $\Phi^T\Phi=I$, the maximizer to $f(\Phi)$ is any set of $k$ orthonormal vectors which span the principal subspace, i.e., with the same span as the $k$ eigenvectors of $P^\pi$ with top absolute eigenvalues. 
\end{restatable}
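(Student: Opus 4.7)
The proof naturally splits into two parts: showing $\dot f\ge 0$ along trajectories, and identifying the maximizers of $f$ on the Stiefel manifold $\{\Phi:\Phi^T\Phi=I\}$. By \cref{theorem:nocollapse} together with \cref{assumption:init}, $\Phi_t^T\Phi_t=I$ for all $t$, so both parts can freely use this identity. Since $P^\pi$ is symmetric, $P_t=\Phi_t^T P^\pi\Phi_t$ is also symmetric, and $f(\Phi)=\mathrm{Tr}\!\left((\Phi^T P^\pi\Phi)^2\right)$.

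For the monotonicity, the plan is to compute the Euclidean gradient directly. Writing $M=\Phi^T P^\pi \Phi$ and differentiating $\mathrm{Tr}(M^2)$, a short calculation (using symmetry of both $P^\pi$ and $M$) gives $\nabla_\Phi f=4P^\pi\Phi P_t$. Taking the Frobenius inner product with $\dot\Phi_t=(I-\Phi_t\Phi_t^T)P^\pi\Phi_t P_t$ and using symmetry of $P_t$, one obtains
\begin{align*}
\dot f \;=\; 4\,\mathrm{Tr}\!\left(P_t\Phi_t^T P^\pi(I-\Phi_t\Phi_t^T)P^\pi\Phi_t P_t\right).
\end{align*}
Because $\Phi_t^T\Phi_t=I$, the matrix $I-\Phi_t\Phi_t^T$ is the orthogonal projector onto the complement of the column space of $\Phi_t$, hence symmetric and idempotent. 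Setting $B=P^\pi\Phi_t P_t$, the right-hand side becomes $4\|(I-\Phi_t\Phi_t^T)B\|_F^2\ge 0$. Moreover, vanishing of this quantity is exactly the statement $(I-\Phi_t\Phi_t^T)P^\pi\Phi_t P_t=0$, i.e.\ $\dot\Phi_t=0$, so $\dot f>0$ whenever $\Phi_t\notin\mathcal{C}_{P^\pi}$.

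For the maximizers, the plan is to diagonalize $P^\pi=U\Lambda U^T$ with $U$ orthogonal and $\Lambda=\mathrm{diag}(\lambda_1,\dots,\lambda_{|\mathcal{X}|})$, and set $V=U^T\Phi$, which still satisfies $V^TV=I$. Then $f(\Phi)=\mathrm{Tr}\!\left((V^T\Lambda V)^2\right)=\|V^T\Lambda V\|_F^2$, and the problem reduces to maximizing $\|V^T\Lambda V\|_F^2$ over Stiefel matrices $V$. The key identity, obtained from the Pythagorean decomposition $\Lambda V=VV^T\Lambda V+(I-VV^T)\Lambda V$, is
\begin{align*}
\|V^T\Lambda V\|_F^2 \;=\; \|\Lambda V\|_F^2 - \|(I-VV^T)\Lambda V\|_F^2 \;\le\; \|\Lambda V\|_F^2 \;=\; \mathrm{Tr}(\Lambda^2 VV^T).
\end{align*}

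The final step is a standard Ky Fan maximum principle applied to $\Lambda^2$ (a diagonal PSD matrix with entries $\lambda_i^2$): over rank-$k$ orthogonal projectors $W=VV^T$, $\mathrm{Tr}(\Lambda^2 W)\le \sum_{i=1}^k\lambda_{(i)}^2$, where $\lambda_{(1)}^2\ge\cdots\ge\lambda_{(|\mathcal{X}|)}^2$ denotes the eigenvalues sorted by absolute value, with equality iff $\mathrm{Range}(V)$ coincides with the top-$k$ eigenspace of $\Lambda^2$. When this holds, $\mathrm{Range}(V)$ is $\Lambda$-invariant so $(I-VV^T)\Lambda V=0$ and the first inequality is also tight; conversely, equality throughout forces $\mathrm{Range}(V)$ to be the principal subspace. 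Translating back via $U$ gives the claimed characterization: the maximizers of $f$ on $\{\Phi:\Phi^T\Phi=I\}$ are exactly the orthonormal bases of the top absolute-eigenvalue $k$-dimensional subspace of $P^\pi$. The only part that requires genuine care is the first inequality in the display above and its tightness condition; once phrased via the orthogonal decomposition of $\Lambda V$ it is essentially a bookkeeping step, and the rest is Ky Fan.
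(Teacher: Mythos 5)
Your proof is correct and follows essentially the same strategy as the paper, with a somewhat cleaner organization in the maximizer half. For the monotonicity part, your computation is the same as the paper's; you express the result more compactly as $\dot f = 4\|(I-\Phi_t\Phi_t^T)P^\pi\Phi_t P_t\|_F^2 = 4\|\dot\Phi_t\|_F^2$, which makes the strict-positivity clause immediate, whereas the paper spells out the same quantity column by column as $4\sum_i a_{i,t}^T(I-\Phi_t\Phi_t^T)a_{i,t}$ and argues each term is non-negative. For the maximizer part, both proofs reduce to the bound $\operatorname{Tr}\bigl((\Phi^T P^\pi\Phi)^2\bigr)\le\operatorname{Tr}\bigl(\Phi^T(P^\pi)^2\Phi\bigr)$ followed by a Ky Fan step; the paper obtains it by first rotating $\Phi$ by an orthogonal $Q$ so that $\Phi^T P^\pi\Phi$ becomes diagonal and then applying Cauchy--Schwarz column by column, while you diagonalize $P^\pi$ itself and use the Pythagorean decomposition of $\Lambda V$ into its components along and orthogonal to $\operatorname{Range}(V)$. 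Your route dispenses with the auxiliary rotation $Q$ and makes the equality condition (that $\operatorname{Range}(\Phi)$ be a $P^\pi$-invariant subspace) visibly tight in both inequalities, at the mild cost of introducing the eigendecomposition of $P^\pi$ explicitly; the paper's route keeps everything at the level of the small $k\times k$ matrix $\Phi^T P^\pi\Phi$. Both are sound; the differences are presentational rather than conceptual.
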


To see that the trace objective $f$ measures useful information contained in $P^\pi$, for now let us constrain the arguments to $f$ to be the set of $k$ eigenvectors $(u_{i_j})_{j=1}^k$ of $P^\pi$. In this case, $f\left([u_{i_1}...u_{i_k}]\right)=\sum_{j=1}^k \lambda_{i_j}^2$ is the sum of the corresponding squared eigenvalues. This implies $f$ is a useful measure on the spectral information contained in $P^\pi$. 

\cref{theorem:eigen} also shows that as long as $\Phi_t$ is not at a stationary point contained in $\mathcal{C}_{P^\pi}$, the trace objective $f(\Phi_t)$ makes strict improvement over time. Equivalently, this means $\Phi_t$ has the tendency to move towards representations with high trace objective, e.g., subspaces of eigenvectors with high trace objective. In other words, we can understand the dynamics of $\Phi_t$ as principal subspace PCA on the transition matrix.

\begin{figure}[t]
    \centering
    \includegraphics[keepaspectratio,width=.45\textwidth]{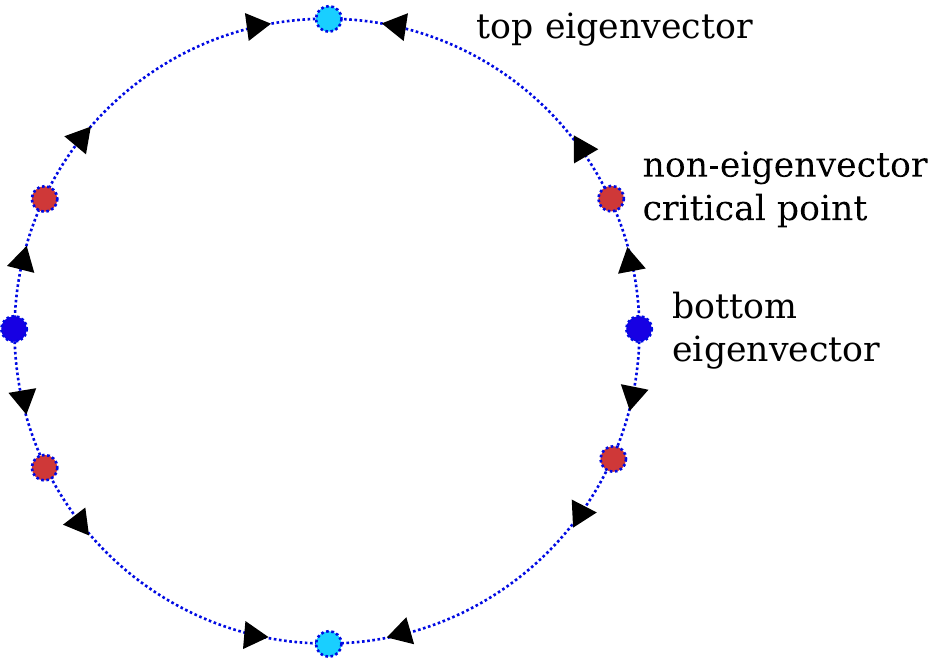}
    \caption{Critical points and local dynamics of the example MDP in \cref{eq:example1}. We consider $k=1$ so representations $\Phi_t$ are $2$-d unit vectors. There are four eigenvector critical points (light and dark blue) and four non-eigenvector critical points (red) of the ODE, shown on the unit circle. The black arrows show the local update direction based on the ODE. Initialized near the bottom eigenvector, the dynamics converges to one of the four non-eigenvector critical points and not to the top eigenvector. See \cref{appendix:critical} for more detailed explanations.
   } 
    \label{fig:criticalpoints}
\end{figure}

\paragraph{Remarks on the convergence.}
Thus far, there is no guarantee that $\Phi_t$ converges to the top $k$ eigenvectors as in general there is a chance that the dynamics converges other critical points. We revisit the simple example in \cref{eq:example1}, where in \cref{fig:criticalpoints} we mark all critical points on the unit circle (four eigenvector and four non-eigenvector critical points). When initialized near the bottom eigenvector, the dynamics converges to one of the non-eigenvector critical points instead of the top eigenvector. 

Nevertheless, the local improvement property of the learning dynamics can be very valuable in large-scale environments. We leave a more refined study on the convergence properties of self-predictive learning dynamics to future work.

\paragraph{Remarks on the case with non-symmetric $P^\pi$.} \cref{theorem:eigen} is obtained under the idealized assumption that $P^\pi$ is symmetric. In general, when $P^\pi$ is non-symmetric, the improvement in the trace objective $f(\Phi_t)$ is not necessarily monotonic.
In fact, it is possible to find instances of $\Phi_t$ where the dynamics  decreases the trace objective $f$. 
A plausible  explanation is that since the dynamics of $\Phi_t$ can be understood as gradient-ascent based PCA on $P^\pi$, the PCA objective is only well defined when the data matrix $P^\pi$ is symmetric. Motivated by the limitation of the self-predictive learning and its connection to PCA, we propose a novel self-predictive algorithm with two representations. We will introduce such a method in \cref{sec:double} and reveal how it generalizes self-predictive learning to carrying out SVD instead of PCA on $P^\pi$.

Before moving on, we empirically assess how much impact that the level of symmetry of $P^\pi$ has on the trace maximization property. We carried out simulations on $100$ randomly generated tabular MDPs, by unrolling the exact ODE dynamics in \cref{eq:ode} and measured the evolution of the trace objective $f_t = \text{Trace}\left((\Phi_t^TP^\pi\Phi_t)^T \Phi_t^TP^\pi\Phi_t\right)$.
\Cref{fig:violation} shows the ratio between the trace objective $f_t$ and the the value of the objective for the top $k$ eigenvectors of $P^\pi$, versus the number of training iterations $t$.

\begin{figure}[t]
    \centering
    \includegraphics[keepaspectratio,width=.45\textwidth]{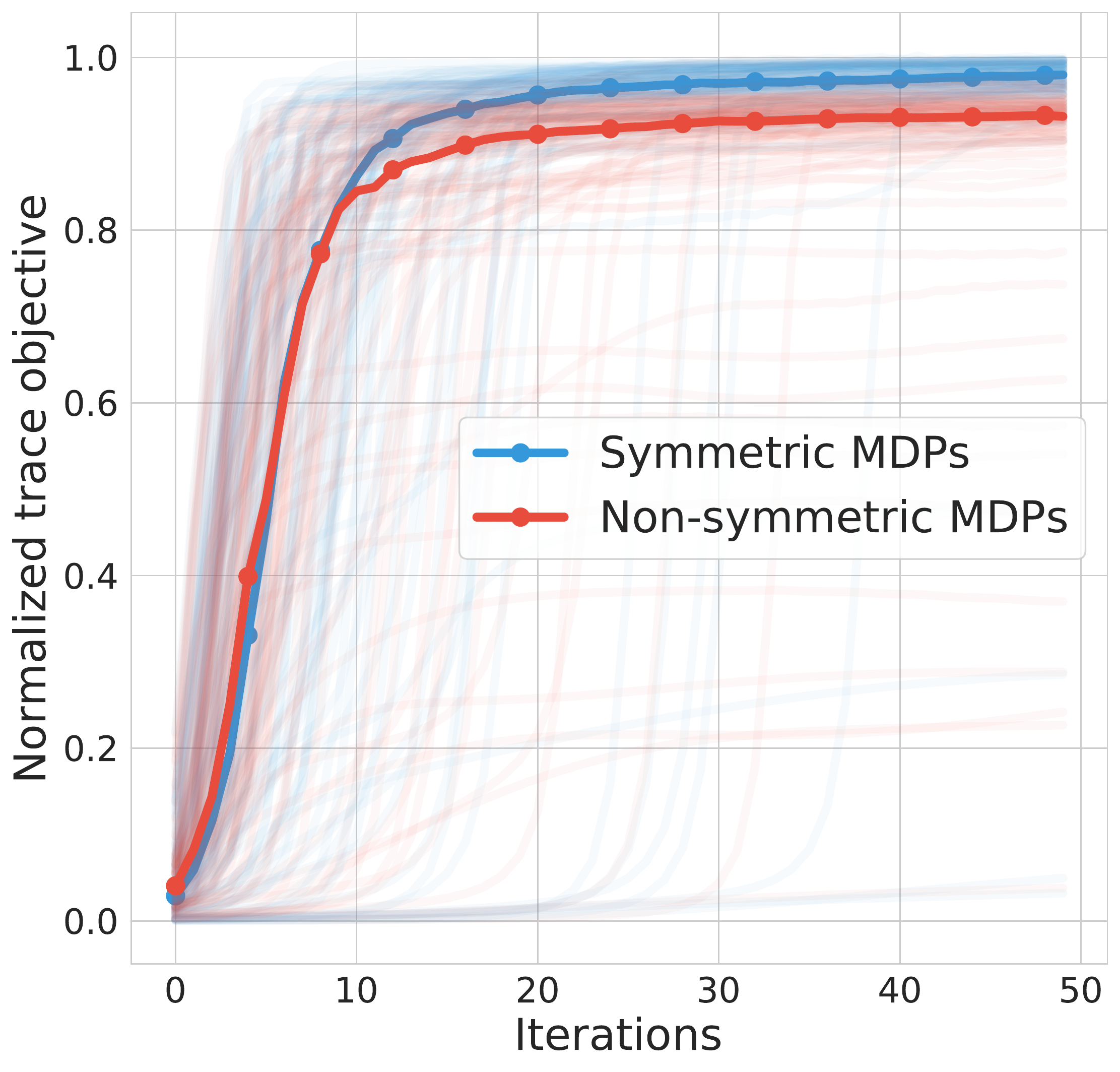}
    \caption{
    Ratio between the trace objective $f_t=\text{Trace}\left((\Phi_t^TP^\pi\Phi_t)^T \Phi_t^TP^\pi\Phi_t\right)$ and the value of the objective for the top $k$ eigenvectors of $P^\pi$, versus the number of training iterations.
    Each light curve corresponds to one of $100$ independent runs over randomly generated MDPs, and the solid curve shows the median over runs. The experiments are based on the exact ODE dynamics in \cref{eq:uniformD-ode}.} 
    \label{fig:violation}
\end{figure}

\cref{fig:violation} shows that when $P^\pi$ is symmetric, the trace objective smoothly improves over time, as predicted by theory. When $P^\pi$ is non-symmetric, the improvement in trace objective is not guaranteed to be monotonic and, indeed, this is the case for some runs.
However in our experiments, over time, the objective improved by a large margin compared to initialization, though not necessarily converging to the maximum possible values. The numerical evidence shows that the learning dynamics can still capture useful information about the transition dynamics for certain non-symmetric MDPs. It is, however, possible to design non-symmetric $P^\pi$ on which self-predictive dynamics barely increases the trace objective (see \cref{sec:double}).
\Cref{appendix:exp} contains additional results and more details about the experiment details.

\section{Extensions of the theoretical analysis}
\label{sec:extensions}

We have chosen to analyze the learning dynamics under arguably the simplest possible model setup. This helps elucidate important features about the self-predictive learning dynamics, but also leaves room for extensions. We discuss a few possibilities.

\paragraph{Additional prediction function.}
Practical algorithms such as SPR \citep{schwarzer2021dataefficient} and BYOL-RL (the representation learning component of BYOL-Explore \citep{guo2022byol}) usually employ an additional prediction function, on top of the prediction function $P$ which models the latent transition dynamics. In our framework, this can be modeled as an additional prediction matrix $Q\in\mathbb{R}^{k\times k}$ with the overall loss function as follows
\begin{align*}
   \mathbb{E}\left[ \left\lVert Q^T P^T \Phi^T x - \Phi^T y \right\rVert_2^2\right].
\end{align*}
The roles of $P$ and $Q$ are different. $P$ is meant to model the latent transition dynamics, while $Q$ provides extra degrees of freedom to match the predicted latent $Q^T P^T \Phi^T x$ to the next state representation $\Phi^T y$. Though such a combination of $Q,P$ seems redundant at the first sight, the extra flexibility entailed by the additional prediction proves very important in practice ($Q$ is usually implemented as a MLP on top of the output of $P$, which is implemented as a LSTM \citep{schwarzer2021dataefficient,guo2020bootstrap}). Our theoretical result can be extended to this case by treating the composed matrix $PQ$ as a whole during optimization, to ensure the non-collapse of $\Phi$.

\paragraph{Multi-step and action-conditional latent prediction.}

In practice, making multi-step predictions significantly improves the performance \citep{schwarzer2021dataefficient,guo2020bootstrap,guo2022byol}. In our framework, this can be understood as the loss function
\begin{align*}
   \mathbb{E}_{x_n\sim \left(P^\pi\right)^n (\cdot|x_0)}\left[ \left\lVert P^T \Phi^T x_0 - \Phi^T x_n \right\rVert_2^2\right].
\end{align*}
In this case, our result in \cref{sec:dynamics} suggests that the self-prediction carries out spectral decomposition on the $n$-step transition $\left(P^\pi\right)^n$. Another important practical component is that latent transition models are usually action-conditional. In the one-step case, this can be understood as parameterizing multiple prediction matrices and representation matrices $(P_a,\Phi_a)_{a\in\mathcal{A}}$ and . The loss function naturally becomes
\begin{align*}
   \mathbb{E}_{x_n\sim P^\pi\left (\cdot|x_0,a\right),a\sim \pi(\cdot|x_0)}\left[ \left\lVert P_a^T \Phi_a^T x_0 - \Phi_a^T x_n \right\rVert_2^2\right].
\end{align*}
The latent prediction matrix $P_a$ and representation $\Phi_a$ effectively carry out spectral decomposition on the Markov matrix $P^{\pi_a}$, which is the transition matrix of policy $\pi_a$ that takes action $a$ in all states.

\paragraph{Partial observability.} In many practical applications, the environment is better modeled as a partially observable MDP (POMDP; \citep{cassandra1994acting}). As a simplified setup, consider at time $t$ the agent has access to the current history $h_t=\left(o_s\right)_{s\leq t}\in\mathcal{H}$ which consists of observations $o_s\in\mathcal{O}$ in past time steps. Fixing the agent's policy $\pi:\mathcal{H}\rightarrow\mathcal{P}(\mathcal{A})$, let $\tilde{P}^\pi(h'|h)$ denote the distribution over the next observed history $h'$ given $h$. Drawing direct analogy to the MDP case, one possible loss function is
\begin{align*}
    \mathbb{E}_{h'\sim \tilde{P}^\pi(\cdot|h)}\left[\left\lVert P^T\Phi^Th-\Phi^Th'\right\rVert_2^2\right].
\end{align*}
In practice, $\Phi\in\mathbb{R}^{|\mathcal{H}|\times k}$ is often implemented as a recurrent function such as LSTM (see, e.g., BYOL-RL \citep{guo2020bootstrap} as one possible implementation and find its details in \cref{appendix:byolrl}), to avoid the explosion in the size of the set of all histories $|\mathcal{H}|$. Under certain conditions, our analysis can be extended to spectral decomposition on the history transition matrix $P^\pi(h'|h)$. However, given recent empirical advances achieved by self-predictive learning algorithm in partially observable environments \citep{guo2022byol}, potentially a more refined analysis is valuable in better bridging the theory-practice gap in the POMDP case.

\paragraph{Finite learning rate and other factors.} Our analysis and result heavily rely on the assumption of continuous time dynamics. In practice, updates are carried out on discrete time steps with a finite learning rate. Through experiments, we observe that representations tend to partially collapse when learning rates are finite, though they still manage to capture spectral information about the transition matrix. We also study the impact of other factors such as non-optimal prediction matrix and delayed target network, see \cref{appendix:exp} for such ablation study. Formalizing such results in theory would be an interesting future direction. 

\section{Bidirectional self-predictive learning with left and right representations}
\label{sec:double}

Thus far, we have established a few important properties of the self-predictive learning dynamics.
However, we have alluded to the fact that self-predictive learning dynamics can ill-behave in certain cases. 

With insights derived from previous sections,
we now introduce a novel self-predictive learning algorithm that makes use of two representations $\Phi_t, \tilde{\Phi}_t\in\mathbb{R}^{|\mathcal{X}| \times k}$ and two latent prediction matrices $P,\tilde{P}\in\mathbb{R}^{k\times k}$. 
We refer to $\Phi_t$ as the \emph{left representation} and $\tilde{\Phi}_t$ the \emph{right representation}, for reasons that will be clear shortly. With $\Phi_t$, we make forward prediction through $P$, using prediction target computed from $\tilde{\Phi}_t$;  with $\tilde{\Phi}_t$, we make backward prediction through $\tilde{P}$, using prediction target computed from $\Phi_t$. Both representations follow semi-gradient updates:
\begin{align}
\label{eq:double-update}
\begin{split}
    \Phi_{t+1}&\leftarrow\Phi_t -\eta \nabla_{\Phi_t} \mathbb{E} \left[\left\lVert P_t^T \Phi_t^Tx - \sg\left(\tilde{\Phi}_t^Ty\right) \right\rVert_2^2\right],\\
    \tilde{\Phi}_{t+1}&\leftarrow\tilde{\Phi}_t -\eta \nabla_{\tilde{\Phi}_t} \mathbb{E} \left[\left\lVert \tilde{P}_t^T \tilde{\Phi}_t^Ty - \sg\left(\Phi_t^Tx\right) \right\rVert_2^2\right].
\end{split}
\end{align}
Similar to the analysis before, we assume $P_t,\tilde{P}_t$ are optimally adapted to the representations, by exactly solving the forward and backward least square prediction problems. This is a key requirement to ensure non-collapse in the new learning dynamics (similar to the self-predictive dynamics in \cref{eq:uniformD-ode}). The pseudocode for the bidirectional self-predictive learning algorithm is in \cref{algo:doublebyoldynamics}.

\begin{algorithm}[t]
\begin{algorithmic}
\STATE Representation matrix $\Phi_0\tilde{\Phi}_0\in\mathbb{R}^{|\mathcal{X}| \times k}$ for $k\leq |\mathcal{X}|$
\FOR{$t=1,2...T$}
\STATE Compute optimal forward and backward prediction matrix $(P_t,\tilde{P}_t)$ based on \cref{eq:double-ode}.
\STATE Update representations $(\Phi_t,\tilde{\Phi}_t)$ based on \cref{eq:double-update}.
\ENDFOR
\STATE  Output final representation $(\Phi_T,\tilde{\Phi}_T)$.
\caption{Bidirectional self-predictive learning.\label{algo:doublebyoldynamics}}
\end{algorithmic}
\end{algorithm}

For notational simplicity, we denote the forward and backward prediction losses as $L_\mathrm{f}(\Phi,P)$ and $L_\mathrm{b}(\tilde{\Phi},\tilde{P})$ respectively.
The continuous time ODE system for the joint variable $(P_t,\tilde{P}_t,\Phi_t,\tilde{\Phi}_t)$ is
\begin{align}
\label{eq:double-ode}
\begin{split}
    P_t &\in \arg\min_P L_\mathrm{f}(\Phi_t,P),\ \ \\ \dot{\Phi}_t &= - \nabla_{\Phi_t} \mathbb{E} \left[\left\lVert P^T \Phi^Tx - \sg\left(\tilde{\Phi}^Ty\right) \right\rVert_2^2\right], \\
    \tilde{P}_t &\in \arg\min_{\tilde{P}} L_\mathrm{b}(\tilde{\Phi}_t,\tilde{P}),\ \  \\ \dot{\tilde{\Phi}}_t &= - \nabla_{\tilde{\Phi}_t} \mathbb{E} \left[\left\lVert \tilde{P}^T \tilde{\Phi}^Ty - \sg\left(\Phi^Tx\right) \right\rVert_2^2\right].
\end{split}
\end{align}

Similar to \cref{theorem:nocollapse},
the non-collapse property for both the left and right representations follows.
\begin{restatable}{theorem}{theoremnocollapsedouble}\label{theorem:nocollapsedouble} Under the bidirectional self-predictive learning dynamics in \cref{eq:double-ode}, the covariance matrices $\Phi_t^T\Phi_t \in\mathbb{R}^{k\times k}$ and $\tilde{\Phi}_t^T\tilde{\Phi}_t \in\mathbb{R}^{k\times k}$ are both constant matrices over time.
\end{restatable}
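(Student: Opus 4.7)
The plan is to mirror the proof of \cref{theorem:nocollapse}, which (as the key structural fact) rests on the identity $\Phi_t^T \dot{\Phi}_t = 0$: the semi-gradient update combined with the first-order optimality condition for the least-squares predictor cancels out exactly. Since the bidirectional dynamics in \cref{eq:double-ode} are designed to be symmetric between the forward and backward branches, and since in each branch the update of a representation sees the \emph{other} representation only through a stop-gradient target, I expect to be able to repeat the single-representation argument twice, once for $\Phi_t$ and once for $\tilde{\Phi}_t$.

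Concretely, first I would write out $\dot{\Phi}_t$ by differentiating the forward loss. With $\Sigma_x = \mathbb{E}[xx^T]$ and $\Sigma_{xy} = \mathbb{E}[xy^T]$, a direct calculation gives
\begin{align*}
  \dot{\Phi}_t = -2\bigl(\Sigma_x \Phi_t P_t - \Sigma_{xy}\tilde{\Phi}_t\bigr) P_t^T,
\end{align*}
so that
\begin{align*}
  \Phi_t^T \dot{\Phi}_t = -2\bigl(\Phi_t^T \Sigma_x \Phi_t P_t - \Phi_t^T \Sigma_{xy}\tilde{\Phi}_t\bigr) P_t^T.
\end{align*}
The second step is to invoke the first-order optimality condition for $P_t \in \arg\min_P L_\mathrm{f}(\Phi_t,P)$, which reads
\begin{align*}
  \Phi_t^T \Sigma_x \Phi_t P_t = \Phi_t^T \Sigma_{xy}\tilde{\Phi}_t.
\end{align*}
Substituting this into the previous display makes the bracket vanish, yielding $\Phi_t^T \dot{\Phi}_t = 0$. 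Taking the transpose gives $\dot{\Phi}_t^T \Phi_t = 0$, so
\begin{align*}
  \frac{d}{dt}\bigl(\Phi_t^T \Phi_t\bigr) = \dot{\Phi}_t^T \Phi_t + \Phi_t^T \dot{\Phi}_t = 0,
\end{align*}
which is the desired conservation law.

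The third step is to repeat the argument verbatim for $\tilde{\Phi}_t$, with $x \leftrightarrow y$, $\Phi_t \leftrightarrow \tilde{\Phi}_t$, $P_t \leftrightarrow \tilde{P}_t$, and $\Sigma_{xy} \leftrightarrow \Sigma_{yx}$; the optimality condition $\tilde{\Phi}_t^T \Sigma_y \tilde{\Phi}_t \tilde{P}_t = \tilde{\Phi}_t^T \Sigma_{yx} \Phi_t$ again annihilates the corresponding bracket, giving $\tilde{\Phi}_t^T \dot{\tilde{\Phi}}_t = 0$ and hence constancy of $\tilde{\Phi}_t^T \tilde{\Phi}_t$.

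I do not expect a serious obstacle here: the main thing to verify is that the decoupling introduced by the two stop-gradient operators is strong enough that, in each branch, the other representation enters $\dot{\Phi}_t$ (respectively $\dot{\tilde{\Phi}}_t$) only through the same linear combination that is cancelled by the corresponding predictor's normal equation. This is precisely the structural property that the bidirectional algorithm inherits from its unidirectional analogue, and it is what makes the separate conservation of $\Phi_t^T \Phi_t$ and $\tilde{\Phi}_t^T \tilde{\Phi}_t$ work even though the two dynamics are coupled. The only mild subtlety worth noting is that we do not need $\Phi_t^T \Sigma_x \Phi_t$ (resp. $\tilde{\Phi}_t^T \Sigma_y \tilde{\Phi}_t$) to be invertible to apply the argument; we only use the normal equations as equalities, never solving for $P_t$ or $\tilde{P}_t$ explicitly.
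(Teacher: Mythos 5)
Your proposal is correct and follows essentially the same approach as the paper: invoke the first-order optimality (normal) equation for the predictor to conclude $\Phi_t^T\dot{\Phi}_t = 0$ (and symmetrically $\tilde{\Phi}_t^T\dot{\tilde{\Phi}}_t = 0$), then differentiate the Gram matrix. The paper simply phrases the cancellation abstractly via the chain rule through the product $A_t = \Phi_t P_t$ (as in the proof of \cref{theorem:nocollapse}) and notes the argument applies to each branch separately, whereas you write out the normal equations explicitly for the squared loss; both are the same argument.
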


As before, to simplify the presentation, we make the assumption that both left and right representations are initialized orthonormal (cf.~\cref{assumption:init}):
\begin{restatable}{assumption}{assumptioninitdouble}\label{assumption:init-double} (\textbf{Orthonormal Initialization}) The left and right representations are both initialized orthonormal $\Phi_0^T\Phi_0=\tilde{\Phi}_0^T\tilde{\Phi}_0=I_{k\times k}$.
\end{restatable}

\subsection{Bidirectional self-predictive learning as singular value decomposition}

In self-predictive learning, the forward prediction derives from the fact that the forward process $x\rightarrow y$ follows from a Markov chain. Following a similar argument, for the backward prediction to be sensible, we need to ensure that the reverse process $y\rightarrow x$ is also a Markov chain. Technically, this means we require $\left(P^\pi\right)^T$ to be a transition matrix too, which models the backward transition process. Importantly, this is a much weaker assumption than $P^\pi$ be symmetric, as required by the self-predictive learning dynamics (\cref{theorem:eigen}).

\begin{restatable}{assumption}{assumptiondoublestochastic}\label{assumption:doublestochastic}  $P^\pi$ is a doubly stochastic matrix, i.e., $\left(P^\pi\right)^T$ is also a transition matrix.
\end{restatable}

Under assumptions above, the learning dynamics in \cref{eq:double-ode} reduces to the following set of ODEs:
\begin{align}
    P_t &= \Phi_t^T P^\pi\tilde{\Phi}_t,  \ \ \dot{\Phi}_t = \left(I-\Phi_t\Phi_t^T\right) P^\pi \tilde{\Phi}_t (P_t)^T \nonumber \\
    \tilde{P}_t &= \tilde{\Phi}_t^T \left(P^\pi\right)^T\Phi_t,  \ \ \dot{\tilde{\Phi}}_t = \left(I-\tilde{\Phi}_t\tilde{\Phi}_t^T\right) \left(P^\pi\right)^T \Phi_t (\tilde{P}_t)^T 
    \label{eq:double-ode-2}
\end{align}

Let $P^\pi=U\Sigma V^T$ be the singular value decomposition (SVD) of $P^\pi$, where $\Sigma=\text{diag}(\sigma_1,\sigma_2...\sigma_{|\mathcal{X}|})$ is a diagonal matrix with non-negative diagonal entries. We call any $i$-th column of $U$ and $V$, denoted as $(u_i,v_i)$, a singular vector pair.
As before, we start by examining the critical points of the bidirectional self-predictive dynamics.
\begin{restatable}{lemma}{lemmacriticaldouble}\label{lemmacriticaldouble}  Let $\mathcal{\tilde{C}}_{P^\pi} \subset \mathbb{R}^{|\mathcal{X}| \times k} \times \mathbb{R}^{|\mathcal{X}| \times k}$ be the set of critical points to \cref{eq:double-ode-2}. Then $\mathcal{\tilde{C}}_{P^\pi}$ contains any pair of matrices, whose columns are orthonormal and have the same span as $k$ of singular vector pairs.
\end{restatable}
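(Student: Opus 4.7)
The plan is to reduce the critical-point condition to a pair of subspace-invariance statements, and then verify those directly from the SVD of $P^\pi$. Observe that the right-hand sides of both ODEs in \cref{eq:double-ode-2} are left-multiplied by the orthogonal projectors $I - \Phi_t\Phi_t^T$ and $I - \tilde{\Phi}_t\tilde{\Phi}_t^T$ respectively. Since, under the orthonormality hypothesis, $\Phi_t\Phi_t^T$ and $\tilde{\Phi}_t\tilde{\Phi}_t^T$ are the orthogonal projectors onto the column spans $\mathrm{col}(\Phi_t)$ and $\mathrm{col}(\tilde{\Phi}_t)$, it suffices to show that $P^\pi \tilde{\Phi}_t \in \mathrm{col}(\Phi_t)$ and $(P^\pi)^T\Phi_t \in \mathrm{col}(\tilde{\Phi}_t)$; the values of $P_t$ and $\tilde{P}_t$ then do not matter.

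Next, I would fix an index set $S = \{i_1, \ldots, i_k\}$ and write $U_S = [u_{i_1}\,\cdots\,u_{i_k}]$, $V_S = [v_{i_1}\,\cdots\,v_{i_k}]$, $\Sigma_S = \diag(\sigma_{i_1}, \ldots, \sigma_{i_k})$. From $P^\pi = U\Sigma V^T$ one gets the basic SVD identities $P^\pi V_S = U_S \Sigma_S$ and $(P^\pi)^T U_S = V_S \Sigma_S$, which encode that $P^\pi$ maps $\mathrm{span}(V_S)$ into $\mathrm{span}(U_S)$ and vice versa. By hypothesis, $\mathrm{col}(\Phi_t) = \mathrm{span}(U_S)$ and $\mathrm{col}(\tilde{\Phi}_t) = \mathrm{span}(V_S)$ with orthonormal columns, so there exist orthogonal $R, \tilde{R} \in O(k)$ with $\Phi_t = U_S R$ and $\tilde{\Phi}_t = V_S \tilde{R}$, and the projectors satisfy $\Phi_t\Phi_t^T = U_S U_S^T$ and $\tilde{\Phi}_t\tilde{\Phi}_t^T = V_S V_S^T$.

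Plugging in, $P^\pi \tilde{\Phi}_t = P^\pi V_S \tilde{R} = U_S \Sigma_S \tilde{R}$, whose columns lie in $\mathrm{span}(U_S) = \mathrm{col}(\Phi_t)$; therefore $(I - \Phi_t\Phi_t^T) P^\pi \tilde{\Phi}_t = 0$, which forces $\dot{\Phi}_t = 0$. Symmetrically, $(P^\pi)^T \Phi_t = V_S \Sigma_S R$ lies in $\mathrm{col}(\tilde{\Phi}_t)$, so $\dot{\tilde{\Phi}}_t = 0$. Hence $(\Phi_t, \tilde{\Phi}_t)$ is a critical point, as claimed.

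The only real subtlety — and what I would emphasize when writing the proof up — is that the ``$k$ singular vector pairs'' hypothesis must be read as a \emph{matched} condition: the same index set $S$ is used for both representations, with $\Phi_t$ aligned to the left singular vectors $\{u_{i_j}\}$ and $\tilde{\Phi}_t$ to the right singular vectors $\{v_{i_j}\}$. Without this matching, the two SVD identities above would not couple correctly and the projectors would fail to annihilate $P^\pi \tilde{\Phi}_t$ and $(P^\pi)^T \Phi_t$. Given the matching, the rest is a direct SVD computation with no further obstacles.
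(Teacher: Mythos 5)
Your proof is correct and takes essentially the same route as the paper's: both verify the critical-point condition directly from the SVD identities $P^\pi V_S = U_S\Sigma_S$ and $(P^\pi)^T U_S = V_S\Sigma_S$ after writing $\Phi_t = U_S R$, $\tilde{\Phi}_t = V_S\tilde{R}$. Your streamlining (observing that the projector already annihilates $P^\pi\tilde{\Phi}_t$ and $(P^\pi)^T\Phi_t$, so the precise forms of $P_t,\tilde{P}_t$ never need to be substituted) and your explicit note that the index set $S$ must be \emph{matched} across the two representations are both sound and slightly cleaner than the paper's direct computation of $(I-UU^T)U\Sigma^2 = 0$.
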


\cref{lemmacriticaldouble} implies that any $k$ pairs of SVD vectors are a critical point to the learning dynamics. However, similar to the self-predictive learning dynamics, not all critical points are equally informative. We propose a SVD trace objective $\tilde{f}(\Phi_t,\tilde{\Phi}_t)$, which measures the information contained in $k$ singular vector pairs. Interestingly, the bidirectional self-predictive learning dynamics locally improves such an objective.
\begin{restatable}{theorem}{theoremsvd}\label{theorem:svd} Under \cref{assumption:init-double} and the learning dynamics in \cref{eq:double-ode-2}, the following SVD trace objective is non-decreasing $\dot{\tilde{f}}\geq 0$, where
\begin{align*}
    \tilde{f}\left(\Phi_t,\tilde{\Phi}_t\right) \coloneqq \text{Trace}\left(\left(\Phi_t^T P^\pi \tilde{\Phi}_t\right)^T\left(\Phi_t^T P^\pi \tilde{\Phi}_t\right)\right).
\end{align*}
If $(\Phi_t,\tilde{\Phi}_t)\notin\mathcal{\tilde{C}}_{P^\pi}$, then $\dot{\tilde{f}}>0$.
Under the constraint $\Phi^T\Phi=\tilde{\Phi}^T\tilde{\Phi}=I$, the maximizer to $\tilde{f}(\Phi,\tilde{\Phi})$ is any two sets of $k$ orthonormal vectors with the same span as the $k$ singular vector pairs of $P^\pi$ with top singular values.
\end{restatable}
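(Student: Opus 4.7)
The plan is to mirror the trace-derivative argument used for \cref{theorem:eigen}, now in the bidirectional setting. First, combining \cref{assumption:init-double} with \cref{theorem:nocollapsedouble} gives $\Phi_t^T\Phi_t = \tilde{\Phi}_t^T\tilde{\Phi}_t = I$ for all $t\geq 0$. Setting $M_t \coloneqq \Phi_t^T P^\pi \tilde{\Phi}_t$, the first-order optimality conditions in \cref{eq:double-ode-2} collapse under orthonormal columns to $P_t = M_t$ and $\tilde{P}_t = M_t^T$, so that $\tilde{f}(\Phi_t,\tilde{\Phi}_t) = \text{Trace}(M_t^T M_t) = \lVert M_t\rVert_F^2$, and the flows become $\dot{\Phi}_t = (I - \Phi_t\Phi_t^T) P^\pi \tilde{\Phi}_t M_t^T$ and $\dot{\tilde{\Phi}}_t = (I - \tilde{\Phi}_t\tilde{\Phi}_t^T) (P^\pi)^T \Phi_t M_t$.

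Next I would compute $\dot{\tilde{f}}_t = 2\,\text{Trace}(M_t^T \dot{M}_t)$ with $\dot{M}_t = \dot{\Phi}_t^T P^\pi \tilde{\Phi}_t + \Phi_t^T P^\pi \dot{\tilde{\Phi}}_t$. To bring this into a manifestly non-negative form I would use two facts that follow from the orthonormality of the columns: (i) $\dot{\Phi}_t^T \Phi_t = 0$ and $\dot{\tilde{\Phi}}_t^T \tilde{\Phi}_t = 0$, because the drifts live in the ranges of the complementary projections; and (ii) the identity decomposition $I = (I - \Phi_t\Phi_t^T) + \Phi_t\Phi_t^T$ (and its analogue for $\tilde{\Phi}_t$) inserted between the factors. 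After cyclic rearrangement the cross terms vanish by (i) and one arrives at the clean identity
\begin{align*}
    \dot{\tilde{f}}_t \;=\; 2\,\lVert \dot{\Phi}_t\rVert_F^2 + 2\,\lVert \dot{\tilde{\Phi}}_t\rVert_F^2 \;\geq\; 0,
\end{align*}
establishing monotonicity. Since the right-hand side vanishes exactly when $\dot{\Phi}_t = \dot{\tilde{\Phi}}_t = 0$, i.e., when $(\Phi_t,\tilde{\Phi}_t) \in \mathcal{\tilde{C}}_{P^\pi}$, this also gives the strict inequality off the critical set.

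For the maximizer statement I would rewrite $\tilde{f} = \lVert P_\Phi\, P^\pi\, P_{\tilde{\Phi}}\rVert_F^2$, where $P_\Phi \coloneqq \Phi\Phi^T$ and $P_{\tilde{\Phi}} \coloneqq \tilde{\Phi}\tilde{\Phi}^T$ are rank-$k$ orthogonal projections (this follows from the cyclic property of the trace together with $P_\Phi^2 = P_\Phi$). Since orthogonal projections are contractions, Weyl's singular-value inequality yields $\sigma_j(P_\Phi P^\pi P_{\tilde{\Phi}}) \leq \sigma_j(P^\pi)$; because the product has rank at most $k$, summing squares gives $\tilde{f} \leq \sum_{i=1}^k \sigma_i^2$. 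Writing $P^\pi = U\Sigma V^T$, choosing $P_\Phi$ to project onto $\spanop(u_1,\dots,u_k)$ and $P_{\tilde{\Phi}}$ onto $\spanop(v_1,\dots,v_k)$ reduces $P_\Phi P^\pi P_{\tilde{\Phi}}$ to $U_k \diag(\sigma_1,\dots,\sigma_k) V_k^T$, attaining the bound; since $\tilde{f}$ depends on $(\Phi,\tilde{\Phi})$ only through $(P_\Phi, P_{\tilde{\Phi}})$, the set of maximizers is exactly all pairs of orthonormal $k$-frames spanning the top-$k$ singular subspaces.

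The main obstacle is the algebraic rearrangement in the second paragraph: without the cancellations induced by $\dot{\Phi}_t^T \Phi_t = 0$ and its analogue, $\dot{\tilde{f}}_t$ would not collapse into a sum of Frobenius norms and non-decrease would not be apparent. These cancellations are precisely what the semi-gradient construction provides (via the factor $I - \Phi_t\Phi_t^T$ on the left of $\dot{\Phi}_t$), highlighting the structural role of the stop-gradient, exactly as in \cref{theorem:eigen}. The maximizer claim is then a standard consequence of Weyl-type singular-value inequalities for products with orthogonal projections.
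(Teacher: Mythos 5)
Your proof is correct and the monotonicity argument lands on a genuinely cleaner identity than the paper's, while the maximizer argument uses a different tool.

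On monotonicity, the paper computes the same time derivative, plugs in the dynamics for $\dot{\tilde{\Phi}}_t$ and $\dot{\Phi}_t$, and leaves the two contributions in the form $\text{Trace}\bigl(\tilde{A}_t^T (I-\tilde{\Phi}_t\tilde{\Phi}_t^T)\tilde{A}_t\bigr)$ (and its analogue), arguing non-negativity column-by-column from the projection, and strictness by noting that off $\tilde{\mathcal{C}}_{P^\pi}$ some column of $\tilde{A}_t$ falls outside $\spanop(\tilde{\Phi}_t)$. You observe that since $\dot{\tilde{\Phi}}_t = (I-\tilde{\Phi}_t\tilde{\Phi}_t^T)\tilde{A}_t$ and the projection is idempotent, this very expression equals $\lVert \dot{\tilde{\Phi}}_t\rVert_F^2$, yielding the clean identity $\dot{\tilde f}_t = 2\lVert\dot{\Phi}_t\rVert_F^2 + 2\lVert\dot{\tilde{\Phi}}_t\rVert_F^2$. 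That is the same underlying mechanism (the $I-\Phi\Phi^T$ factor supplied by the stop-gradient plus the optimality of the predictor), but packaged so that both $\dot{\tilde f}\geq 0$ and the strict inequality off $\tilde{\mathcal{C}}_{P^\pi}$ are immediate. On the maximizer, the paper's route is elementary: it performs an SVD change of variables on the $k\times k$ matrix $\Phi^TP^\pi\tilde\Phi$ to reduce to the diagonal case, then applies a Cauchy--Schwarz bound $\bigl(\phi_i^T P^\pi\tilde\phi_i\bigr)^2 \leq \lVert P^\pi\tilde\phi_i\rVert_2^2$ and finishes with the Rayleigh-quotient characterization of the top singular subspace. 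You instead rewrite $\tilde f = \lVert P_\Phi P^\pi P_{\tilde\Phi}\rVert_F^2$ and invoke Weyl's singular-value product inequality together with the rank-$k$ bound. Both are valid; yours is more structural at the cost of invoking a heavier hammer, and both identify the top-$k$ singular subspaces as optimal. One minor overreach: your final sentence asserts the maximizers are \emph{exactly} the orthonormal frames spanning the top-$k$ singular subspaces, but the argument only shows that such frames attain the bound (which is all the theorem claims); uniqueness does not follow from what you wrote, and indeed fails when $\sigma_k = \sigma_{k+1}$.
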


To verify that the SVD trace objective $\tilde{f}$ provides an information measure on the representation vectors $(\Phi_t,\tilde{\Phi}_t)$, we constrain arguments of $\tilde{f}$ to be the set of $k$ singular vector pairs $(u_{i_j},v_{i_j})_{j=1}^k$ of $P^\pi$, then $\tilde{f}\left([u_{i_1}...u_{i_k}],[v_{i_1}...v_{i_k}]\right)=\sum_{j=1}^k \sigma_{i_j}^2$ is the sum of the corresponding squared singular values.  The top $k$ singular vector pairs maximize this objective, and hence contain the most information about $P^\pi$ based on this measure.

\cref{theorem:svd} shows that as long as either one of the two representations are not at the critical points, i.e., $\dot{\Phi}_t\neq 0$ or $\dot{\tilde{\Phi}}_t\neq 0$, the SVD trace objective $\tilde{f}(\Phi_t,\tilde{\Phi}_t)$ is being strictly improved under the bidirectional self-predictive learning dynamics. 
Equivalently, this implies the left and right representations $(\Phi_t,\tilde{\Phi}_t)$ tend to move towards singular vector pairs with high SVD trace objective, i.e., seeking more information about the transition dynamics.

\paragraph{Two representations vs. one representation.} Looking beyond the ODE analysis, we explain why having two separate representations are inherently important for representation learning in general. For a transition matrix $P^\pi$, its left and right singular vectors in general differ.
bidirectional self-predictive learning provides the flexibility to learn both left and right singular vectors in parallel, without having to compromise their differences. On the other hand, a single representation will need to interpolate between left and right singular vectors, which may lead to  non-monotonic behavior in the trace objective as alluded to earlier.

Consider a very simple transition matrix with $|\mathcal{X}|=3$ states that illustrate the failure mode of the self-predictive learning dynamics with a single representation,
\begin{align*}
   P^\pi = 
\begin{bmatrix}
    0 & 1/2 & 1/2 \\
    0 & 1/2 & 1/2 \\
    1 & 0 & 0 \\
\end{bmatrix}.
\end{align*}
By construction, its top left and right singular vectors differ greatly. 
We simulated the self-predictive learning dynamics (with a single representation, \cref{eq:ode}) and the bidirectional self-predictive learning dynamics (\cref{eq:double-ode}) in an MDP with this transition matrix, and measured the evolution of the two trace objectives, $f_t$ and $\tilde{f_t}$. \Cref{fig:double_single} shows the ratio between the trace objectives and maximum value of $\tilde{f}$ obtained at the top $k$ singular vector pairs, versus the number of training iterations $t$.
\begin{figure}[t]
    \centering
    \includegraphics[keepaspectratio,width=.45\textwidth]{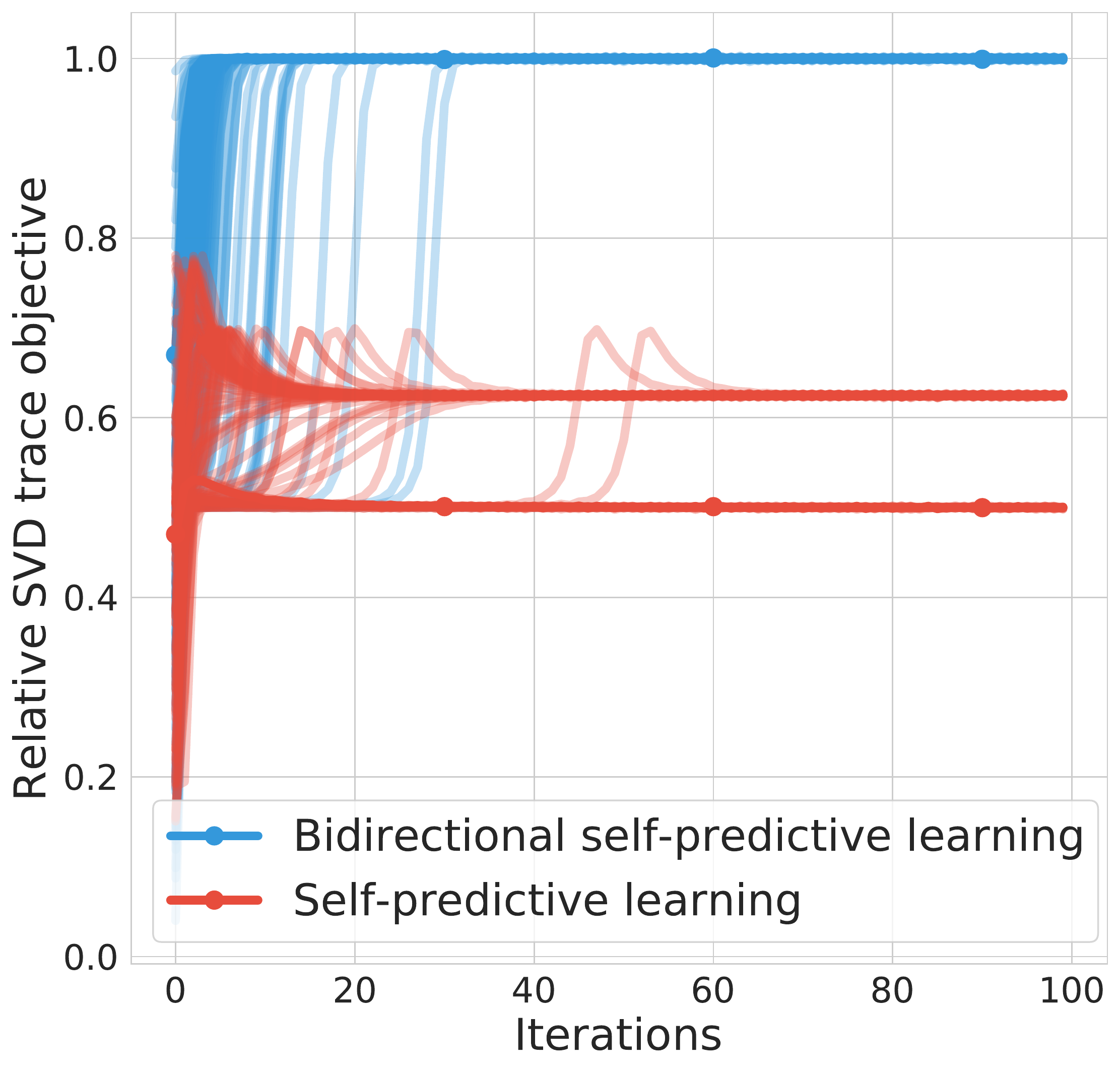}
    \caption{
    Ratio between the trace objectives ($f_t$ in red for self-predictive learning, following \cref{eq:ode}; and $\tilde{f}_t$ in blue for bidirectional self-predictive learning, following \cref{eq:double-ode}) and the value of $\tilde{f}$ for the top $k$ singular vector pairs of $P^\pi$, versus the number of training iterations.
    Each light curve corresponds to one of $100$ independent runs over random orthonormal initializations of $\Phi$ on the same MDP designed so that the left and right singular vectors of $P^\pi$ are very different. 
    The solid curve shows the median over runs.} 
    \label{fig:double_single}
\end{figure}
The bidirectional self-predictive learning improves the objective steadily over time. In contrast, the single representation dynamics mostly halt at the initialized value, due to the limited capacity of one representation to combine two highly distinct singular vectors. See more details in \cref{appendix:double-single}. 

In addition to the improved stability shown in the example above, bidirectional self-predictive learning also captures more comprehensive information about the transition dynamics, compared to a single representation. While it is known that left singular vectors of $P^\pi$ can approximate value functions $V^\pi$ with provably low errors for general transition matrix \citep{behzadian2019fast}, it is challenging to learn the left singular vectors as standalone objects. Bidirectional self-predictive learning captures both left and right representations at the same time, entailing a better approximation to both left and right top singular vectors. In large-scale experiments, since bidirectional self-predictive learning consists of both forward and backward predictions, it can provide richer signals for representation learning when combined with nonlinear function approximation (\cref{sec:exp}).

\section{Prior Work}
\label{sec:discussion}

\paragraph{Non-collapse mechanism of self-predictive learning dynamics.} Self-prediction based RL representation learning algorithms were partly inspired from the non-contrastive unsupervised learning algorithm \citep{grill2020bootstrap,chen2021exploring}.
A number of prior work attempt to understand the non-collapse learning dynamics of such algorithms, through the roles of semi-gradient and regularization \citep{tian2021understanding}, prediction head \citep{wen2022mechanism} and data augmentation \citep{wang2021towards}. 
Although our analysis is specialized to the RL case, the non-collapse mechanism (\cref{theorem:eigen}) is qualitatively different from prior work. Such a result is potentially useful in understanding the behavior of unsupervised learning as well.

\paragraph{RL representation learning via spectral decomposition.} One primary line of research in representation learning for RL is via spectral decomposition of the transition matrix $P^\pi$ or successor matrix $(I-\gamma P^\pi)^{-1}$. These methods are generally categorized as: (1) eigenvector-decomposition based approach, which typically assumes symmetry or real diagonizability of $P^\pi$ \citep{mahadevan2005proto,c.2018eigenoption,lyle2021effect}; (2) SVD-based approach, which is more generally applicable \citep{behzadian2019fast,ren2022spectral} and shows theoretical benefits to downstream RL tasks. Our work draws the connections between spectral decomposition and more empirically oriented RL algorithm, such as SPR \citep{schwarzer2021dataefficient}, PBL \citep{guo2020bootstrap} and BYOL-RL \citep{guo2022byol}, and is one step in the direction of formally chracterizing high performing representation learning algorithms. 

\paragraph{Forward-backward representations.} Closely related to bidirectional self-predictive learning is the forward-backward (FB) representations \citep{touati2021learning,blier2021learning}. By design, the forward representation learns value functions and backward representation learns visitation distributions. This design bears close connections to the left and right singular vectors of the transition matrix $P^\pi$, which bidirectional self-predictive learning seeks to approximate. Despite the high level connection, bidirectional self-predictive learning is purely based on self-prediction, and hence has much simpler algorithmic design.

\paragraph{Algorithms for PCA and SVD with gradient-based update.} The ODE systems in \cref{eq:uniformD-ode,eq:double-ode} bear close connections to ODE systems used for studying gradient-based incremental algorithms for PCA and SVD of empirical covariance matrices in classical unsupervised learning. Example algorithms include Oja's subspace algorithm \citep{oja1985stochastic,oja1992principal} and its extension to SVD \citep{diamantaras1996principal,weingessel1997svd}. A primary historical motivation for such algorithms is that they entail computing top $k$ eigenvectors or singular vectors with incremental gradient-based updates. This echoes with the observation we make in this paper, that self-predictive learning dynamics can be understood as gradient-based spectral decomposition on the transition matrix.

\section{Deep RL implementation}
\label{sec:exp}

We considered the single representation self-predictive learning dynamics as an idealized theoretical framework that aims to capture some essential aspects of a number of existing deep RL representation learning algorithms \citep{schwarzer2021dataefficient,guo2020bootstrap}.
Importantly, our theoretical analysis suggests that we can get more expressive representations by leveraging the bidirectional self-predictive learning dynamics in \cref{sec:double}.

Inspired by the theoretical discussions, we introduce the \emph{deep bidirectional self-predictive learning} algorithm for representation learning for deep RL. We build the deep bidirectional self-predictive learning algorithm on top of the representation learning used in BYOL-Explore \citep{guo2022byol}. 
While BYOL-Explore uses the prediction loss as a signal to drive exploration, we do not use such exploration bonuses in this work and focus only on the effect of representation learning.

We now provide a concise summary of how BYOL-RL works and how it is adapted for bidirectional self-predictive learning. In general partially observable environments, BYOL-RL encodes a history of observations
$h_t=(f(o_s))_{s\leq t}$ into its latent representation $\Phi(h_t)\in\mathbb{R}^k$ through a convnet $f:\mathcal{O}\rightarrow\mathbb{R}^k$ and a  LSTM. Then, the algorithm constructs a multi-step forward prediction $
    p\left(\Phi(h_t),a_{t:t+n-1})\right) 
$ with an open loop LSTM $p:\mathbb{R}^k\times(\mathcal{A})^n\rightarrow\mathbb{R}^d$. This forward prediction is against a back-up target computed at the $n$-step forward future time step $f(o_{t+n})$. The bidirectional self-predictive learning algorithm hints at a backward latent self-prediction objective, i.e., predicting the past latent observations based on future representations. In a nutshell, for deep bidirectional self-predictive learning, we implement the backward prediction in the same way as forward prediction, but with a \emph{reversed} time axis. See \cref{appendix:byolrl} for more technical details on BYOL-RL and how it is adapted for backward predictions.

A related but different idea has been adopted in PBL \citep{guo2019efficient}, where they propose a reverse prediction that matches $\tilde{f}(o_{t+n})$ for some function $\tilde{f}:\mathcal{O}\rightarrow\mathbb{R}^k$ against the encoded representation $\Phi(h_{t+n})$. The design of PBL is motivated by partial observability, and does not carry out backward predictions over time. See \cref{appendix:byolrl} for details on PBL and how it differs significantly from deep bidirectional self-predictive learning.

\subsection{Experiments}
We compare the deep bidirectional self-predictive learning algorithm with BYOL-RL \citep{guo2020bootstrap}. BYOL-RL is built on V-MPO \citep{Song2020V-MPO:}, an actor-critic algorithm which shapes the representation using policy gradient, without explicit representation learning objectives. 

Our testbed is DMLab-30, a collection of 30 diverse partially observable cognitive tasks in the 3D DeepMind Lab \citep{beattie2016deepmind}. We consider the multi-task setup where the agent is required to solve all $30$ tasks simultaneously. Representation learning has proved highly valuable in such a setting \citep{guo2019efficient}. See \cref{fig:dmlab30-pergame-imp-rl} in \cref{appendix:exp} for results on how BYOL-RL significantly improves over baseline RL algorithm per each game.

\begin{figure}[t]
    \centering
    \includegraphics[keepaspectratio,width=.45\textwidth]{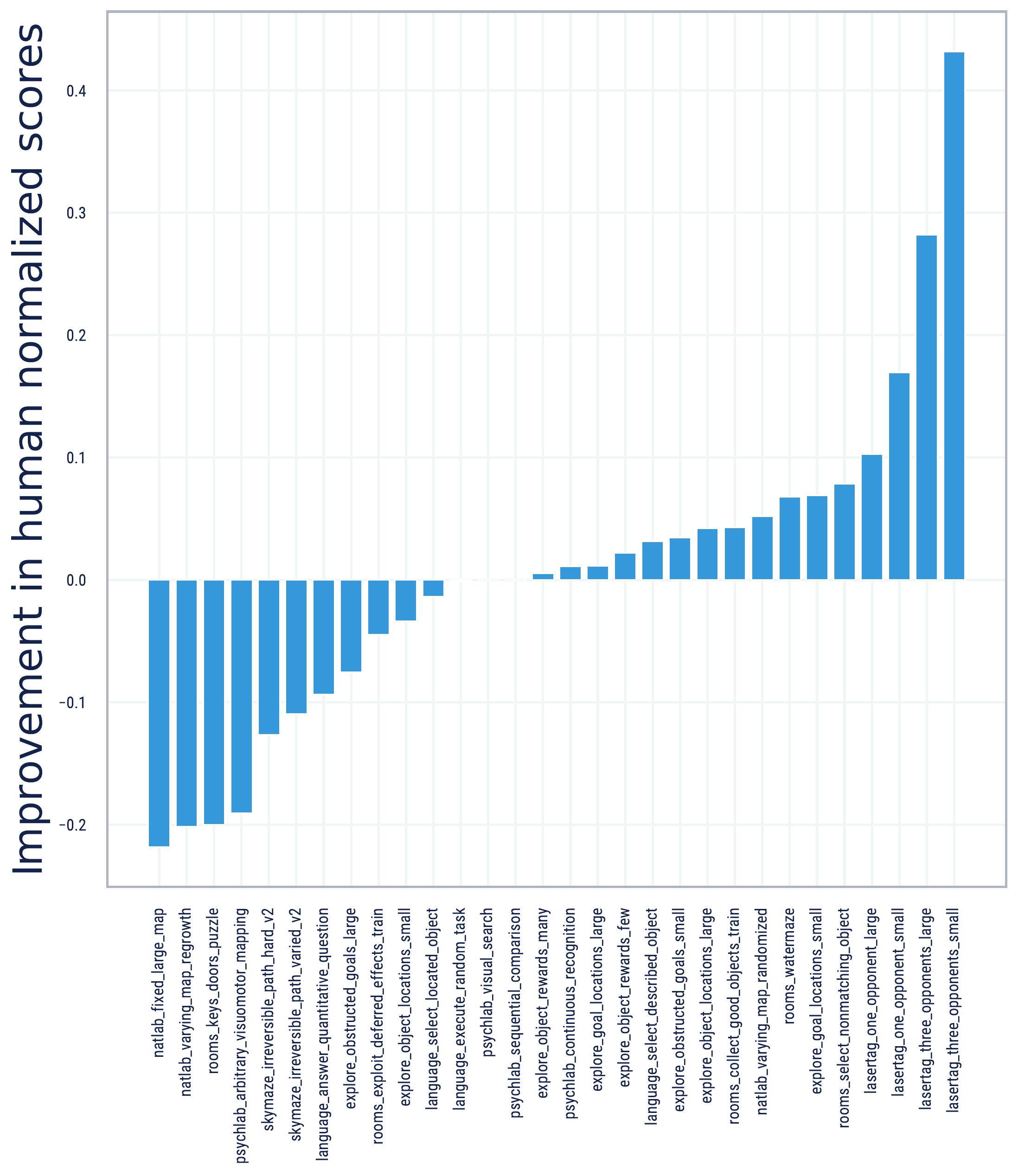}
     \caption{Per-game improvement of bidirectional self-predictive learning compared to baseline BYOL-RL, in terms of mean human normalized scores averaged across $3$ seeds. The scores are obtained at the end of training. Bidirectional self-predictive learning provides noticeable gains over BYOL-RL in certain cases, while it degrades performance in some others. See \cref{appendix:exp} for details.}
    \label{fig:dmlab30-pergame-imp}
\end{figure}

In \cref{fig:dmlab30-pergame-imp}, we show the per-game improvement of deep bidirectional self-predictive learning over BYOL-RL. In aggregate across all $30$ tasks, the two algorithms perform comparably as they either under-perform or over-perform each other in a similar number of games. We measure the performance of each task with the human normalized performance $(z_i-u_i)/(h_i-u_i)$ with $1\leq i\leq 30$, where $u_i$ and $h_i$ are the raw score performance of random policy and humans. A normalized score of $1$ indicates that the agent performs as well as humans on the task. Interestingly, there are a number of tasks on which backward predictions significantly improve over the baseline by as much as $0.4$ human normalized score. 
This comparison shows the promise of the deep bidirectional self-predictive learning algorithm when combined with deep RL agents.

\section{Conclusion}
In this work, we have presented a new way to understand the learning dynamics of self-predictive learning for RL. By identifying key algorithmic components to guarantee non-collapse, and drawing close connections between self-prediction and spectral decomposition of the transition dynamics, we have provided a first justification to the practical efficacy of a few empirically motivated algorithms. Our insights also naturally led to a novel representation learning algorithm, which mirrors SVD-based algorithms and enjoys more flexibility than self-prediction with a single representation. A deep RL instantiation of the algorithm has also proved empirically comparable to state-of-the-art performance.

Our results open up many interesting avenues for future research. A few examples: in hindsight, many assumption are due to the limitations of linear models; it will be of significant interest to study non-linear latent predictions and representations. 
Another interesting direction would be to study how self-prediction interacts with RL algorithms, such as TD-learning or policy gradient algorithms. Lastly, our analysis also provides insights to the unsupervised learning case, and hopefully motivates more investigation along this direction in the space.

\paragraph{Acknowledgements.} 
Many thanks to Jean-Bastien Grill and Csaba Szepesv\'ari for their feedback on an earlier draft of this paper. 
We are appreciative of the collaborative research environment within DeepMind.
The experiments in this paper were conducted using Python 3, and made heavy use of NumPy \citep{harris2020array}, SciPy \citep{2020SciPy-NMeth}, Matplotlib \citep{hunter2007matplotlib}, Reverb \citep{cassirer2021reverb}, JAX \citep{jax2018github} and the DeepMind JAX Ecosystem \citep{deepmind2020jax}.

\bibliography{main}
\bibliographystyle{plainnat}

\clearpage
\onecolumn

\begin{appendix}

\section*{\centering APPENDICES: Understanding Self-Predictive Learning for Reinforcement Learning}

\section{Detailed derivations of ODE systems }\label{appendix:ode}

We provide a derivation of the ODE systems in \cref{eq:uniformD-ode,eq:double-ode-2} below. We start with a few useful facts: recall that $x\sim d,y\sim P^\pi(\cdot|x)$ are one-hot encoding of states. Let $D$ be a diagonal matrix with $d$ its diagonal entries $D_{ii}=d_i,\forall 1\leq i\leq |\mathcal{X}|$. Then, we have the following properties:
\begin{align*}
    \mathbb{E}\left[xx^T\right] = D, \mathbb{E}\left[xy^T\right] = DP^\pi.
\end{align*}

\subsection{\cref{eq:uniformD-ode} for self-predictive learning}

Starting with \cref{eq:ode}, the first-order optimality condition for $P_t$ can be made more explicit
\begin{align*}
   \left(\Phi_t^T \mathbb{E}\left[xx^T\right] \Phi_t\right) P_t =  \Phi_t^T \mathbb{E}\left[xy^T\right]  \Phi_t \Rightarrow \left(\Phi_t^T D \Phi_t \right) P_t =  \Phi_t^T DP^\pi \Phi_t.
\end{align*}
We can expand the dynamics for $\Phi_t$ as follows,
\begin{align*}
    \dot{\Phi}_t = \left(D - D\Phi_t \left(\Phi^T D\Phi_t\right)^{-1}\Phi_t^T D \right) P^\pi \Phi_t (P_t)^T. 
\end{align*}

 Under \cref{assumption:D,assumption:init}, the above dynamics simplifies into
\begin{align*}
   P_t =  \Phi_t^T P^\pi \Phi_t,\ \ \dot{\Phi}_t=\left(I - \Phi_t\Phi_t^T\right) P^\pi \Phi_t (P_t)^T,
\end{align*}
which is the ODE system in \cref{eq:uniformD-ode}.

\subsection{\cref{eq:double-ode-2} for bidirectional self-predictive learning}

Since the bidirectional self-predictive learning dynamics introduces least square regression from $y$ to $x$, we need to calculate expectations such as $\mathbb{E}[yy^T]$ and $\mathbb{E}[yx^T]$. In general, it is challenging to express $\mathbb{E}[yy^T]$ as a function of $D$ and $P^\pi$. When $D$ is identity (\cref{assumption:D}) and when $P^\pi$ is doubly-stochastic (\cref{assumption:doublestochastic}), we have $D$ as a stationary distribution of $P^\pi$ and hence 
$\mathbb{E}[yy^T]=D$ and \begin{align*}
\mathbb{E}\left[yx^T\right]=\mathbb{E}\left[(xy^T)^T\right]=\left(\mathbb{E}[xy^T]\right)^T=\left(DP^\pi\right)^T = \left(P^\pi\right)^T D.
\end{align*}
From \cref{eq:double-update}, we can make explicit the form of the prediction matrix
\begin{align*}
    \left(\Phi_t^T \mathbb{E}\left[xx^T\right] \Phi_t\right) P_t &=  \Phi_t^T \mathbb{E}\left[xy^T\right]  \tilde{\Phi}_t \Rightarrow \left(\Phi_t^T D \Phi_t \right) P =  \Phi_t^T DP^\pi \tilde{\Phi}_t, \\ 
    \left(\tilde{\Phi}_t^T \mathbb{E}\left[yy^T\right] \tilde{\Phi}_t\right) \tilde{P}_t &=  \tilde{\Phi}_t^T \mathbb{E}\left[yx^T\right]  \Phi_t \Rightarrow \left(\tilde{\Phi}_t^T D \Phi_t \right) \tilde{P}_t =  \tilde{\Phi}_t^T \left(P^\pi\right)^TD \Phi_t,
\end{align*}
Next, we can expand the dynamics of $\Phi_t$ and $\tilde{\Phi}_t$ as follows
\begin{align*}
    \dot{\Phi}_t &= \left(D - D\Phi_t \left(\Phi^T D\Phi_t\right)^{-1}\Phi_t^T D \right) P^\pi \tilde{\Phi}_t (P_t)^T \\
    \dot{\tilde{\Phi}}_t &= \left(D - D\tilde{\Phi}_t \left(\tilde{\Phi}^T D\tilde{\Phi}_t\right)^{-1}\tilde{\Phi}_t^T D \right) \underbrace{D^{-1}\left(P^\pi\right)^T D}_{\tilde{P}^\pi} \Phi_t (\tilde{P}_t)^T. 
\end{align*}
Interestingly, $\tilde{P}^\pi$ is also a Markov transition matrix that corresponds to the reverse Markov chain. Finally, plugging into $D=I$ (\cref{assumption:D}) and thanks to \cref{assumption:init-double}, we recover the dynamics in \cref{eq:double-ode-2}
\begin{align*}
    \dot{\Phi}_t &= \left(I - \Phi_t\Phi_t^T  \right) P^\pi \tilde{\Phi}_t (P_t)^T \\
    \dot{\tilde{\Phi}}_t &= \left(I - \tilde{\Phi}_t\tilde{\Phi}_t^T \right) \left(P^\pi\right)^T \Phi_t (\tilde{P}_t)^T. 
\end{align*}

\subsection{Equivalence between assumptions in deriving \cref{eq:double-ode-2}}
Now we provide a discussion on the equivalence between assumptions in deriving the ODE for bidirectional self-predictive learning dynamics. An alternative assumption to \cref{assumption:doublestochastic} is
\begin{restatable}{assumption}{assumptionnextstateuniform}\label{assumptionnextstateuniform}  Given the sampling process $x\sim d, y\sim P^\pi(\cdot|x)$, the marginal distribution over next state $y$ is uniform.
\end{restatable}
Our claim is that given the uniformity assumption on the first-state distribution \cref{assumption:D}, \cref{assumption:doublestochastic} and \cref{assumptionnextstateuniform} are equivalent. 
To see why, given \cref{assumption:doublestochastic}, it is straightforward to see that uniform distribution is a stationary distribution to $P^\pi$. Starting from the first-state distribution, which is uniform, the next-state distribution is also uniform, which proves the condition in \cref{assumptionnextstateuniform}. Now, given \cref{assumptionnextstateuniform}, we conclude the uniform distribution $u=|\mathcal{X}|^{-1}1_{|\mathcal{X}|}$ is a stationary distribution to $P^\pi$. By definition of the stationary distribution, this means
\begin{align*}
    u^T P^\pi = u.
\end{align*}
The above implies that each column of $P^\pi$ sums to $1$, and so $P^\pi$ is doubly-stochastic (\cref{assumption:doublestochastic}).

\section{Proof of theoretical results}
\label{appendix:proof}

\theoremnocollapse*
\begin{proof}
Under the dynamics in \cref{eq:ode}, the prediction matrix $P_t$ optimally minimizes the loss function $L(\Phi_t,P_t)$ given the representation $\Phi_t$. 
Let $A_t=\Phi_tP_t\in\mathbb{R}^{|\mathcal{X}| \times k}$ be the matrix product. The chain rule combined with the first-order optimality condition on $P_t$ implies
\begin{align}
    \nabla_{P_t} L(\Phi_t,P_t) = \Phi_t^T \partial_{A_t} L(\Phi_t,P_t) = 0.\label{eq:optimality-cond}
\end{align}
On the other hand, the semi-gradient update for $\Phi_t$ can be written as 
\begin{align*}
    \dot{\Phi}_t = -\nabla_{\Phi_t} \mathbb{E}_{x\sim d, y \sim P^\pi(\cdot|x)} \left[\left\lVert P_t^T \Phi_t^Tx - \sg\left(\Phi_t^Ty\right) \right\rVert_2^2\right] = -\partial_{A_t}L(\Phi_t,P_t)(P_t)^T.
\end{align*}
Thanks to \cref{eq:optimality-cond}, we have 
\begin{align*}
    \Phi_t^T \dot{\Phi}_t = -\Phi_t^T \partial_{A_t}L(\Phi_t,P_t)(P_t)^T = 0.
\end{align*}
Then, taking time derivative on the covariance matrix
\begin{align*}
    \frac{d}{dt} \left(\Phi_t^T\Phi_t\right) = \dot{\Phi}_t^T\Phi_t + \Phi_t^T\dot{\Phi}_t = \left(\Phi_t^T\dot{\Phi}_t\right)^T + \Phi_t^T\dot{\Phi}_t = 0,
\end{align*}
which implies that the covariance matrix is constant.
\end{proof}

\corollarynocollapse*
\begin{proof}
Take any two representation vectors $\phi_{i,t}$ and $\phi_{j,t}$ with $i\neq j$, which at initialization are different. This implies the cosine similarity $\left\langle\phi_{i,0},\phi_{j,0}\right\rangle\neq 1$. Since under the dynamics in \cref{eq:ode}, the covariance matrix $\Phi_t^T\Phi_t$ is preserved, this means $\phi_{i,t}^T\phi_{j,t}, \phi_{i,t}^T\phi_{1,t}$ and $\phi_{i,t}^T\phi_{j,t}$ are all constants over time, which implies
\begin{align*}
    \left\langle\phi_{i,t},\phi_{j,t}\right\rangle = \left\langle\phi_{i,0},\phi_{j,0}\right\rangle\neq 1.
\end{align*}
This means the two vectors cannot be aligned along the same direction for all time $t\geq 0$.
\end{proof}

\lemmacritical*
\begin{proof}
Without loss of generality, consider the subset of first $k$ right eigenvectors $U=(u_1...u_k)$. 
Then $P^\pi U = U\Lambda$ for some diagonal matrix $\Lambda=\diag(\lambda_1...\lambda_k)$, where $\lambda_i$ is the eigenvalue corresponding to $u_i$.

If $\Phi_t = U$, then
\begin{align*}
    \dot{\Phi}_t=(I-UU^T) P^\pi U(P_t)^T = (I-UU^T) U\Lambda U(P_t)^T = 0.
\end{align*}

Next, for any set of $k$ orthonormal vectors with the same span as $U$, we can write them as $U'=UQ$ for some orthogonal matrix $Q\in\mathbb{R}^{k\times k}$. 
If $\Phi_t = U' = UQ$, then
\begin{align*}
    \dot{\Phi}_t=(I-UQQ^TU) P^\pi UQ(P_t)^T = (I-UU^T) U\Lambda UQ(P_t)^T = 0,
\end{align*}
which concludes the proof.
\end{proof}

\theoremeigen*
\begin{proof}
We first show that the objective is non-decreasing. We calculate
\begin{align*}
    \frac{d}{dt}f(\Phi_t) &= 4\cdot \text{Trace}\left(\left(\Phi_t^T P^\pi \Phi_t \right)^T \Phi_t^T P^\pi \dot{\Phi}_t\right) \\
    &=_{(a)} 4\cdot \text{Trace}\left(P_t \Phi_t^T P^\pi \left(I-\Phi_t\Phi_t^T\right) P^\pi \Phi_t P_t^T\right) \\
    &=_{(b)} 4\cdot \text{Trace}\left( \left( P^\pi \Phi_t P_t^T\right)^T \left(I-\Phi_t\Phi_t^T\right) P^\pi \Phi_t P_t^T\right), \\
\end{align*}
where (a) follows from $P_t=\Phi_t^T P^\pi \Phi_t$; (b) follows from the fact that $P^\pi$ is symmetric and as a result $P_t$ is symmetric. Now, let $A_t=P^\pi\Phi_t P_t^T$ and denote its column vectors as $A_t=[a_{1,t}...a_{k,t}]$. The above derivative rewrites as 
\begin{align*}
    4\cdot \sum_{i=1}^k a_{i,t}^T \left(I-\Phi_t\Phi_t^T\right) a_{i,t}.
\end{align*}
We remind that a projection matrix $M$ satisfies $M^2=M$ and $M^T=M$ and corresponds to an orthogonal projection onto certain subspace. 
Since $I-\Phi_t\Phi_t^T$ is a projection matrix, we have $a_{i,t}^T \left(I-\Phi_t\Phi_t^T\right) a_{i,t} \geq 0$ for any $a_{i,t}\in\mathbb{R}^\mathcal{X}$. Hence $\frac{d}{dt}f(\Phi_t)\geq 0$. Now, if $\Phi_t\notin\mathcal{C}_{P^\pi}$, this means there exists certain columns $a_{i,t}$ of $A_t$ such that $a_{i,t}\notin\spanop(\Phi_t)$. This means $a_{i,t}^T \left(I-\Phi_t\Phi_t^T\right) a_{i,t} > 0$ and therefore $\dot{f}>0$.

Finally, we examine the maximizer to $f(\Phi)$ under the constraint $\Phi^T\Phi=I_{k\times k}$. Since $\Phi^T P^\pi \Phi$ is symmetric, there exists an orthogonal matrix $Q$ such that 
\begin{align*}
    Q^T \Phi^T P^\pi \Phi Q = \Lambda,
\end{align*}
for some diagonal matrix $\Lambda$. Note that since $(\Phi Q)^T\Phi Q=I_{k\times k}$, it is equivalent to consider the optimization problem under a stronger constraint $\Phi^T\Phi=I_{k\times k}$ and $\Phi^T P^\pi \Phi=\Lambda$ for some diagonal matrix $\Lambda$. Therefore, the optimization problem becomes
\begin{align*}
    \max_{\Phi^T\Phi=I_{k\times k},\Phi^T P^\pi \Phi=\Lambda} \sum_{i=1}^k \Lambda_{ii}^2.
\end{align*}
Let $\Phi=[\phi_1,...\phi_k]$ with column vectors $\phi_i\in\mathbb{R}^{|\mathcal{X}|}$ for all $1\leq i\leq k$, then we have the equivalent optimization problem 
\begin{align*}
    \max_{\Phi^T\Phi=I_{k\times k},\Phi^T P^\pi \Phi=\Lambda} \sum_{i=1}^k \left(\phi_i^T P^\pi \phi_i\right)^2 &\leq_{(a)} \max_{\Phi^T\Phi=I_{k\times k},\Phi^T P^\pi \Phi=\Lambda} \sum_{i=1}^k \left\lVert P^\pi \phi_i\right\rVert_2^2 \\  &= \max_{\Phi^T\Phi=I_{k\times k},\Phi^T P^\pi \Phi=\Lambda} \sum_{i=1}^k \phi_i^T \left(P^\pi\right)^TP^\pi \phi_i.
\end{align*}
Here, (a) follows from the fact that $\phi_i$ is a unit-length vector and the application of the inequality $a^Tb\leq \left\lVert a\right\rVert_2 \left\lVert b\right\rVert_2 $. It is straightforward to see that the optimal solution to the last optimization problem is the set of eigenvectors of $P^\pi$ with top squared eigenvalues. Hence, 
\begin{align*}
    \max_{\Phi^T\Phi=I_{k\times k}} f(\Phi) \leq \sum_{i=1}^k \lambda_i^2.
\end{align*}
On the other hand, the $k$ eigenvectors of $P^\pi$ with top $k$ absolute eigenvalues is a feasible solution and therefore $\max_{\Phi^T\Phi=I_{k\times k}} f(\Phi) \geq \sum_{i=1}^k \lambda_i^2$. The above implies $\max_{\Phi^T\Phi=I_{k\times k}} f(\Phi) = \sum_{i=1}^k \lambda_i^2$, and the $k$ eigenvectors of $P^\pi$ with top $k$ absolute eigenvalues is a maximizer to the constrained optimization problem. It is then also clear that any $k$ orthonormal vectors with the same span as the top $k$ eigenvectors also achieves the maximum objective.
\end{proof}

\theoremnocollapsedouble*
\begin{proof}
We consider the forward and backward loss function separately. Following the arguments in the proof of \cref{theorem:nocollapse}, we see that since $P_t$ is computed as the optimal solution to $L_\mathrm{f}(P_t,\Phi_t)$, it satisfies the first-order optimality condition and as a result, $\Phi_t^T\dot{\Phi}_t=0$. This implies $\Phi_t^T\Phi_t$ is a constant matrix over time. Applying the same set of arguments to the backward loss function $L_\mathrm{b}(\tilde{\Phi}_t,\tilde{P}_t)$,  we conclude $\tilde{\Phi}_t^T\tilde{\Phi}_t$ is also a constant matrix over time.
\end{proof}

\lemmacriticaldouble* 
\begin{proof}
Without loss of generality, consider the subset of $k$ top singular vector pairs $U=(u_1...u_k),V=(v_1...v_k)$. By construction, they satisfy the equality $P^\pi V=U\Sigma$ and $\left(P^\pi\right)^TU=V\Sigma$ where $\Sigma=\diag(\sigma_1...\sigma_k)$ is the diagonal matrix with corresponding singular values.

Setting $\Phi_t=U,\tilde{\Phi}_t=V$,
we first verify the critical conditions for $\dot{\Phi}_t=0,\dot{\tilde{\Phi}}_t=0$:
\begin{align*}
    (I-\Phi_t\Phi_t^T) P^\pi \tilde{\Phi}_t (P_t)^T = (I-UU^T) P^\pi V V^T \left(P^\pi\right)^T U =_{(a)} (I-UU^T) U\Sigma^2 = 0.\\
    (I-\tilde{\Phi}_t\tilde{\Phi}_t^T) \left(P^\pi\right)^T \Phi_t (\tilde{P}_t)^T = (I-VV^T) \left(P^\pi\right)^T UU^T P^\pi V =_{(b)} (I-VV^T) V\Sigma^2 = 0.
\end{align*}
Here, (a) and (b) both follow from the property of the singular vector pairs. The above indicates that any $k$ singular vector pairs constitute a member of $\tilde{\mathcal{C}}_{P^\pi}$. 

Any orthonormal vectors with the same vector span as $U,V$ can be expressed as $UQ,VR$ for some orthogonal matrix $Q,R\in\mathbb{R}^{k\times k}$. Let $U'=UQ,V'=VR$, we verify the critical conditions when $\Phi_t=U',\tilde{\Phi}_t=V'$,
\begin{align*}
     (I-\Phi_t\Phi_t^T) P^\pi \tilde{\Phi}_t (P_t)^T = (I-U'(U')^T) P^\pi V' (V')^T \left(P^\pi\right)^T U' =_{(a)} (I-UU^T) U\Sigma^2 Q = 0.\\
     (I-\tilde{\Phi}_t\tilde{\Phi}_t^T) \left(P^\pi\right)^T \Phi_t (\tilde{P}_t)^T = (I-V'(V')^T) \left(P^\pi\right)^T U'(U')^T P^\pi V' =_{(b)} (I-VV^T) V\Sigma^2R = 0,
\end{align*}
where (a) and (b) follow from straightforward matrix operations. We have hence verified that any orthonormal vectors with the same vector span as any subset of $k$ singular vector pairs constitute a critical point.
\end{proof}

\theoremsvd*
\begin{proof}
We start by showing the SVD trace objective is non-decreasing on the ODE flow.
\begin{align*}
    \frac{d}{dt}\tilde{f}\left(\Phi_t,\tilde{\Phi}_t\right)  = 2\cdot \text{Trace}\left(\left(\Phi_t^T P^\pi \tilde{\Phi}_t\right)^T\left(\Phi_t^T P^\pi \dot{\tilde{\Phi}}_t\right)\right) + 2\cdot \text{Trace}\left(\left(\Phi_t^T P^\pi \tilde{\Phi}_t\right)^T\left(\dot{\Phi}_t^T P^\pi \tilde{\Phi}_t\right)\right).
\end{align*}
Examining the first term on the right above, plugging in the dynamics for $\dot{\tilde{\Phi}}$,
\begin{align*}
   \text{Trace}\left(\left(\Phi_t^T P^\pi \tilde{\Phi}_t\right)^T\left(\Phi_t^T P^\pi \dot{\tilde{\Phi}}_t\right)\right) &= \text{Trace}\left(\left(\Phi_t^T P^\pi \tilde{\Phi}_t\right)^T\Phi_t^T P^\pi \left(I-\tilde{\Phi}_t\tilde{\Phi}_t^T\right) \left(P^\pi\right)^T \Phi_t (\tilde{P}_t)^T  \right) \\
   &=_{(a)} \text{Trace}\left(\left(\left(P^\pi\right)^T \Phi_t (\tilde{P}_t)^T \right)^T \left(I-\tilde{\Phi}_t\tilde{\Phi}_t^T\right) \left(P^\pi\right)^T \Phi_t (\tilde{P}_t)^T.  \right)
\end{align*}
Here, (a) follows from the form of the prediction matrix $\tilde{P}_t = \tilde{\Phi}_t^T \left(P^\pi\right)^T\Phi_t$. Now, define $\tilde{A}_t=[a_{1,t}...a_{k,t}]\coloneqq\left(P^\pi\right)^T \Phi_t (\tilde{P}_t)^T$, the above rewrites as 
\begin{align*}
    \text{Trace}(A_t^T \left(I-\tilde{\Phi}_t\tilde{\Phi}_t^T\right) A_t)= \sum_{i=1}^k a_{i,t}^T \left(I-\tilde{\Phi}_t\tilde{\Phi}_t^T\right) a_{i,t}.
\end{align*}
Since $\left(I-\tilde{\Phi}_t\tilde{\Phi}_t^T\right)$ is an orthogonal projection matrix, we conclude the above quantity is non-negative. Similarly, we can show that the second term on the right above is also non-negative, which concludes $\dot{\tilde{f}}\geq 0$.

Now, we assume $(\Phi_t,\tilde{\Phi}_t)\notin\mathcal{\tilde{C}}_{P^\pi}$. Without loss of generality, we assume in this case $\dot{\tilde{\Phi}}_t\neq 0$, which implies there exists certain column $i$ such that $a_{i,t}$ is not in the span of $\tilde{\Phi}_t$. This means $a_{i,t}^T \left(I-\tilde{\Phi}_t\tilde{\Phi}_t^T\right) a_{i,t} > 0$ and subsequently $\dot{\tilde{f}}> 0$.

Finally, we show that under the constraint $\Phi^T\Phi=\tilde{\Phi}^T\tilde{\Phi}=I$, the maximizer to $\tilde{f}(\Phi,\tilde{\Phi})$ is any two set of $k$ orthonormal vectors with the same span as the $k$ singular vector pairs of $P^\pi$ with top singular values.
In general, the matrix $\Phi^T P^\pi \tilde{\Phi}$ is not diagonal. Consider its SVD 
\begin{align*}
    \Phi^T P^\pi \tilde{\Phi} = \tilde{U}\tilde{\Sigma}\tilde{V}^T,
\end{align*}
then we have $\left(\Phi\tilde{U}\right)^T P^\pi \tilde{\Phi}\tilde{V}=\tilde{\Sigma} $. Consider the new representation variable $\Phi'=\Phi\tilde{U}$ and $\tilde{\Phi}'=\tilde{\Phi}\tilde{V}$ and note they also satisfy the orthonormal constraint. Under the new variable, the matrix $(\Phi')^T P^\pi \tilde{\Phi}'=\tilde{\Sigma}$ is diagonal. Since $f(\Phi,\tilde{\Phi})=f(\Phi\tilde{U},\tilde{\Phi}\tilde{V})$, it is equivalent to solve the optimization  problem under an additional diagonal constraint
\begin{align*}
    \max_{\Phi^T\Phi=\tilde{\Phi}^T\tilde{\Phi}=I} f(\Phi,\tilde{\Phi}) = \max_{\Phi^T\Phi=\tilde{\Phi}^T\tilde{\Phi}=I,\Phi^TP^\pi\tilde{\Phi}\ \text{diagonal}} f(\Phi,\tilde{\Phi})
\end{align*}
When $\Phi^TP^\pi\tilde{\Phi}$ is diagonal, the objective rewrites as 
\begin{align*}
   \sum_{i=1}^k \left(\phi_i^T P^\pi \tilde{\phi}_i\right)^2 \leq_{(a)} \sum_{i=1}^k \left\lVert P^\pi \tilde{\phi}_i \right\rVert_2^2 =  \sum_{i=1}^k \tilde{\phi}_i^T \left(P^\pi\right)^TP^\pi \tilde{\phi}_i,
\end{align*}
where (a) follows from the fact that $\phi_i$ is a unit-length vector and the application of the inequality $a^Tb\leq \left\lVert a\right\rVert_2 \left\lVert b\right\rVert_2 $. Now, under the constraint $\tilde{\Phi}^T\tilde{\Phi}=I$, the right hand side is upper bounded by the sum of squared top $k$ singular values of $P^\pi$: $\sum_{i=1}^k\sigma_i^2$. Let $f^\ast$ be the optimal objective of the original constrained problem. We have hence established $f^\ast\leq \sum_{i=1}^k\sigma_i^2$. On the other hand, if we let $\Phi,\tilde{\Phi}$ to be the top $k$ singular vector pairs, they satisfy the constraint and this shows $f^\ast\geq\sum_{i=1}^k\sigma_i^2 $. 

In summary, we have $f^\ast=\sum_{i=1}^k\sigma_i^2$ and the top $k$ singular vector pairs $U,V$ are the maximizer. Any $k$ orthonormal vectors with the same span as  top $k$ singular vector pairs can be expressed as $U'=UQ,V'=VR$ for some orthogonal matrix $Q,R\in\mathbb{R}^{k\times k}$. Since $(U')^TU'=(V')^TV'=I$ and $f(U,V)=f(U',V')=f^\ast$, they
are also the maximizer to the constrained problem.
\end{proof}

\section{Extension of non-collapse property to general loss function}
\label{appendix:loss}

Thus far, we have focused on the squared loss function for understanding the self-predictive learning dynamics,
\begin{align}
\begin{split} 
   P_t &\in \arg\min_P L(\Phi_t,P),  \\ \dot{\Phi}_t &= - \nabla_{\Phi_t} \mathbb{E} \left[\left\lVert P_t^T \Phi_t^Tx - \sg\left(\Phi_t^Ty\right) \right\rVert_2^2\right].
\end{split}
\end{align}
We can extend the result to a more general class of loss function $L(\Phi_t,P_t)$. Such a loss function needs to be computed as an expectation over a function $F$ of the product prediction and representation matrix $\Phi_tP_t$, used for computing the prediction; and another argument for the representation matrix $\Phi_t$, used for computing the target. More formally, we can write $L(\Phi_t,P_t)=F\left(P_t\Phi_t,\Phi_t\right)=\mathbb{E}\left[f\left(P_t^T\Phi_t^Tx, \Phi_t^Ty\right)\right]$ for some function $f:\mathbb{R}^k\times\mathbb{R}^k\rightarrow k$. For the least squared case, we have $f(a,b)=\left\lVert a-b\right\rVert_2^2$. We consider the self-predictive learning dynamics with such a general loss function,
\begin{align}
\begin{split}  \label{eq:ode-general}
   P_t &\in \arg\min_P F\left(\Phi_t P, \Phi_t\right),  \\ \dot{\Phi}_t &= - \nabla_{\Phi_t} F\left(\Phi_tP_t,\text{sg}\left(\Phi_t\right)\right).
\end{split}
\end{align}
We show that the non-collapse property also holds for the above dynamics.
\begin{restatable}{theorem}{theoremnocollapsegeneral}\label{theorem:nocollapsegeneral} Assume the general loss function $L(\Phi,P)$ is such that the minimizer to $L(\Phi,P)$ satisfies the first-order optimality condition, then
under the dynamics in \cref{eq:ode-general}, the covariance matrix $\Phi_t^T\Phi_t \in\mathbb{R}^{k\times k}$ is constant over time.
\end{restatable}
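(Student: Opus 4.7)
The plan is to mirror the argument used for \cref{theorem:nocollapse}, isolating the structural property that makes the least-squares proof go through and observing that it is not specific to the quadratic form. The key is that both the predictor update and the semi-gradient update only involve $\Phi_t$ through the product $A_t \coloneqq \Phi_t P_t$ in the first slot of $F$, while the second slot of $F$ is frozen by the stop-gradient.

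First I would fix notation by writing $F(A, B)$ for the loss as a function of its two matrix arguments, so that $L(\Phi, P) = F(\Phi P, \Phi)$. Because $P$ only enters $F$ through $A = \Phi P$, the chain rule gives $\nabla_P F(\Phi P, \Phi) = \Phi^T\,\partial_A F(\Phi P, \Phi)$. The assumption that the minimizer $P_t$ satisfies the first-order optimality condition therefore yields
\begin{align*}
\Phi_t^T\, \partial_A F(\Phi_t P_t, \Phi_t) \;=\; 0,
\end{align*}
which is the analogue of \cref{eq:optimality-cond} in the general setting.

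Next I would expand the semi-gradient update. Because the second argument of $F$ is inside a stop-gradient, the only contribution to $\nabla_{\Phi_t} F(\Phi_t P_t, \sg(\Phi_t))$ comes from differentiating $A = \Phi_t P_t$ in the first slot, which produces
\begin{align*}
\dot{\Phi}_t \;=\; -\,\partial_A F(\Phi_t P_t, \Phi_t)\, P_t^T.
\end{align*}
Left-multiplying by $\Phi_t^T$ and substituting the optimality condition gives $\Phi_t^T \dot{\Phi}_t = -\bigl(\Phi_t^T\,\partial_A F\bigr)\,P_t^T = 0$. Taking the time derivative of the covariance, $\frac{d}{dt}(\Phi_t^T\Phi_t) = (\Phi_t^T\dot{\Phi}_t)^T + \Phi_t^T\dot{\Phi}_t$, then yields the conclusion.

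The main obstacle, such as it is, is mostly notational: one has to be careful that "first-order optimality of $P_t$" is meaningful when $F$ is not assumed differentiable or convex, and that the chain-rule identity $\nabla_P F(\Phi P, \Phi) = \Phi^T \partial_A F$ is a legitimate equation of matrices of the right shape rather than a heuristic. I would address this by stating the regularity of $F$ (differentiability in its first argument is enough for the chain rule, and stationarity of $P_t$ is precisely the hypothesis) at the outset. No structural property of the squared norm, such as its quadratic form or symmetry, is used anywhere; the argument relies only on (i) the two-timescale optimality of $P_t$ and (ii) the semi-gradient stop on the second argument of $F$, which is exactly what the theorem's hypotheses provide.
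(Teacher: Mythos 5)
Your proposal is correct and follows essentially the same argument as the paper's proof: introduce $A_t = \Phi_t P_t$, use first-order optimality of $P_t$ to get $\Phi_t^T \partial_{A_t} F = 0$, express the semi-gradient update as $\dot{\Phi}_t = -\partial_{A_t}F\, P_t^T$, and conclude $\Phi_t^T\dot{\Phi}_t = 0$ so the covariance is conserved. The additional remark about differentiability in the first argument being the operative regularity condition is a fair clarification that the paper leaves implicit.
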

\begin{proof}
The proof follows closely from the proof of \cref{theorem:nocollapse}.  Let $A_t=\Phi_tP_t\in\mathbb{R}^{|\mathcal{X}| \times k}$ be the matrix product, the assumption implies
\begin{align}
    \nabla_{P_t} F(\Phi_tP_t,\Phi_t) = \Phi_t^T \partial_{A_t} F(\Phi_tP_t,\Phi_t) = 0.\label{eq:optimality-cond-general}
\end{align}
On the other hand, the semi-gradient update for $\Phi_t$ can be written as 
\begin{align*}
    \dot{\Phi}_t = -\nabla_{\Phi_t} F\left(\Phi_tP_t,\Phi_t\right) = -\partial_{A_t}L(\Phi_t,P_t)(P_t)^T.
\end{align*}
Thanks to \cref{eq:optimality-cond-general}, we have 
\begin{align*}
    \Phi_t^T \dot{\Phi}_t = -\Phi_t^T \partial_{A_t}F\left(\Phi_tP_t,\Phi_t\right)(P_t)^T = 0.
\end{align*}
Then, taking time derivative on the covariance matrix
\begin{align*}
    \frac{d}{dt} \left(\Phi_t^T\Phi_t\right) = \dot{\Phi}_t^T\Phi_t + \Phi_t^T\dot{\Phi}_t = \left(\Phi_t^T\dot{\Phi}_t\right)^T + \Phi_t^T\dot{\Phi}_t = 0,
\end{align*}
which implies that the covariance matrix is constant.
\end{proof}
Notable examples of loss functions that satisfy the above assumptions include $L_1$ loss $f(a,b)=|a-b|$ and the regularized cosine similarity loss
$
    f(a,b) = -a^Tb / \left(\left\lVert a\right\rVert_2\left\lVert b\right\rVert_2+\epsilon\right)$, 
where $\epsilon>0$ is a regularization constant. 

\section{Discussions on critical points to the self-predictive learning dynamics}
\label{appendix:critical}

We provide further discussions on the critical points to the self-predictive learning dynamics in \cref{eq:uniformD-ode}. For convenience, we recall the set of ODEs
\begin{align*}
    P_t = \Phi_t^T P^\pi\Phi_t,  \ \ \dot{\Phi}_t = \left(I-\Phi_t\Phi_t^T\right) P^\pi \Phi_t (P_t)^T.
\end{align*}
According to \cref{lemmacritical}, under the assumption that $P^\pi$ is real diagonizable, any matrix with orthonormal columns with the same span as a set of $k$ eigenvectors constitutes a critical point to the ODE. Let $\mathcal{C}$ be the set of such matrices and recall that $\mathcal{C}_{P^\pi}$ is the set of critical points, we have $\mathcal{C}\subseteq\mathcal{C}_{P^\pi}$. 

We now consider two symmetric transition matrices, under which we have $\mathcal{C}=\mathcal{C}_{P^\pi}$ and $\mathcal{C}\subsetneq\mathcal{C}_{P^\pi}$ respectively. In both cases, we have $|\mathcal{X}|=2$ and $k=1$ for simplicity. In this case, $\Phi$ can be expressed as a column vector $\Phi_t$ is a $2$-dimensional vector and the prediction matrix $P_t$ is now a scalar. 

\paragraph{Case I where $\mathcal{C}=\mathcal{C}_{P^\pi}$.}
Consider the following transition matrix
\begin{align*}
   P^\pi = 
\begin{bmatrix}
    0.9 & 0.1\\
    0.1 & 0.9\\
\end{bmatrix}
\end{align*}
The transition matrix is symmetric and has eigenvalue $\lambda_1=1,\lambda_2=0.8$.
Note that
\begin{align*}
   U = [u_1,u_2] = 
\begin{bmatrix}
    \frac{1}{\sqrt{2}} &  \frac{1}{\sqrt{2}}\\
     \frac{1}{\sqrt{2}} & - \frac{1}{\sqrt{2}}\\
\end{bmatrix}
\end{align*}
is the matrix of eigenvectors. For convenience, we write
\begin{align*}
    \Phi_t = U \begin{bmatrix}
   \alpha_t\\
   \beta_t\\
\end{bmatrix}
\end{align*}
with coefficients $\alpha_t,\beta_t\in\mathbb{R}$. \cref{assumption:init} dictates $\alpha_t^2+\beta_t^2=1$. Now, we can calculate
\begin{align*}
    P_t = \Phi_t^T P^\pi \Phi_t = \alpha_t^2 + 0.8\beta_t^2 > 0,
\end{align*}
which is strictly positive and hence always non-zero.
Letting $\dot{\Phi}_t=0$, since $P_t\neq 0$, we conclude $
    \spanop(P^\pi \Phi_t) \subset \spanop(\Phi_t)$. This means $\Phi_t$ is invariant and must be a direct sum of subspaces spanned by eigenvectors. In other words, we have $\mathcal{C}_{P^\pi}\subset \mathcal{C}$ and hence the two sets are in fact equal.
    
\paragraph{Case II where $\mathcal{C}\subsetneq\mathcal{C}_{P^\pi}$.} Consider the following transition matrix we mentioned in \cref{eq:example1}
\begin{align*}
   P^\pi = 
\begin{bmatrix}
    0.1 & 0.9\\
    0.9 & 0.1\\
\end{bmatrix}.
\end{align*}
The transition matrix is symmetric and has eigenvalue $\lambda_1=1,\lambda_2=-0.8$.
Note that
\begin{align*}
   U = [u_1,u_2] = 
\begin{bmatrix}
    \frac{1}{\sqrt{2}} &  \frac{1}{\sqrt{2}}\\
     \frac{1}{\sqrt{2}} & - \frac{1}{\sqrt{2}}\\
\end{bmatrix}
\end{align*}
is the matrix of eigenvectors. For convenience, we write
\begin{align*}
    \Phi_t = U \begin{bmatrix}
   \alpha_t\\
   \beta_t\\
\end{bmatrix}
\end{align*}
with coefficients $\alpha_t,\beta_t\in\mathbb{R}$. \cref{assumption:init} dictates $\alpha_t^2+\beta_t^2=1$.
As before, we calculate
\begin{align*}
    P_t = \Phi_t^T P^\pi \Phi_t = \alpha_t^2- 0.8\beta_t^2 > 0.
\end{align*}
Now, we can identify $\alpha_t=\pm\frac{1}{\sqrt{1.8}},\beta_t=\pm\frac{\sqrt{0.8}}{\sqrt{1.8}}$ as four non-eigenvector critical points. Indeed, since $P_t=0$, we have $\dot{\Phi}_t=0$. However, since this critical point is a combination of two eigenvectors $u_1,u_2$, it does not span the same subspace as either just $u_1$ or $u_2$. In other words, we have found a critical point which does not belong to the set $\mathcal{C}$ and this implies $\mathcal{C}\subsetneq\mathcal{C}_{P^\pi}$.

\paragraph{Convergence of the dynamics in Case II.}
We replicate the diagram \cref{fig:criticalpoints} here in \cref{fig:criticalpoints-appendix}, where we graph critical points to the self-predictive learning dynamics with the above transition matrix on a unit circle. Recall that we have $k=1$ so that representations $\Phi_t$ are $2$-d vectors.
In addition to the eigenvector critical points plotted as blue dots (\cref{lemmacritical}), we have also identified other four critical points in red, corresponding to $\alpha_t=\pm\frac{1}{\sqrt{1.8}},\beta_t=\pm\frac{\sqrt{0.8}}{\sqrt{1.8}}$ in four quadrants of the $2$-d plane. 

The only local update dynamics consistent with the local improvement property (\cref{theorem:eigen}) is shown as black arrows on the unit circle. When the representation is initialized near the bottom eigenvector, it will converge to one of the four non-eigenvector critical points and not the top eigenvector.

\begin{figure}[t]
    \centering
    \includegraphics[keepaspectratio,width=.45\textwidth]{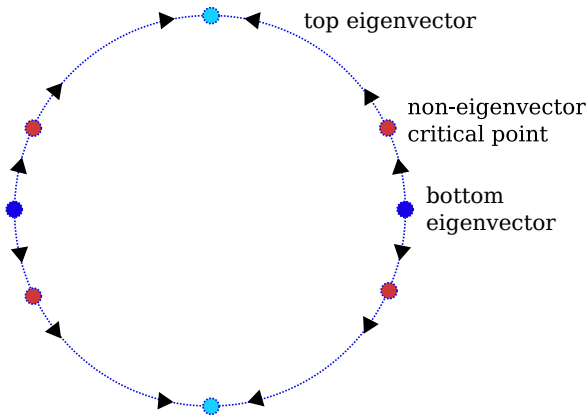}
    \caption{Critical points and local dynamics of the example MDP in \cref{eq:example1}. We consider $k=1$ so representations $\Phi_t$ are $2$-d vectors. There are four eigenvector critical points (light and dark blue) and four non-eigenvector critical points (red) of the ODE, shown on the unit circle. The black arrows show the local update direction based on the ODE. Initialized near the bottom eigenvector, the dynamics converges to one of the four non-eigenvector critical points and not to the top eigenvector.
   } 
    \label{fig:criticalpoints-appendix}
\end{figure}

\section{Algorithmic and implementation details on BYOL-RL}
\label{appendix:byolrl}

We provide background details on BYOL-RL \citep{guo2022byol}, a deep RL agent based on which our deep bidirectional self-predictive learning algorithm is implemented. We start with a relatively high-level description of the algorithm.

BYOL-RL is designed to work in the POMDP setting, where the agent observes a sequence of observations over time $o_t \in \mathcal{O}$. Define history $h_t \in\mathcal{H}$ at time $t$ as the combination of previous observations $h_t=(o_s)_{s\leq t}$ (note here the observation $o_s$ can contain action $a_{s-1}$). BYOL-RL adopts a few functions to represent the raw observation and history
\begin{itemize}
    \item An observation embedding function $f:\mathcal{O}\rightarrow\mathbb{R}^d$ which maps the observation $o_t$ into $d$-dimensional embeddings. In practice, this is implemented as a convolutional neural network.
    \item A recurrent embedding function $g:\mathbb{R}^k\times\mathbb{R}^d\rightarrow\mathbb{R}^k$ which processes the observation in a recurrent way. In practice, this is implemented as the core output of a LSTM.
\end{itemize}
In POMDP, we can consider the history $h_t$ as a proxy to the state in the MDP case. Let $\Phi(h_t)\in\mathbb{R}^k$ be the $k$-dimensional representation of history, we use the recurrent function to embed the history recursively
\begin{align*}
    \Phi(h_t) = g\left(\Phi(h_{t-1}), f(o_t)\right).
\end{align*}
BYOL-RL parameterizes the latent prediction function
$p:\mathbb{R}^k\times\left(\mathcal{A}\right)^n\rightarrow\mathbb{R}^k$, which can be understood as predicting the history representation $n$-step from now on, using only intermediate action sequence $a_{t:t+n-1}$
\begin{align*}
    p\left(\Phi(h_t),a_{t:t+n-1}\right) \in \mathbb{R}^k.
\end{align*}
Finally, another projection function $q: \mathbb{R}^k\rightarrow\mathbb{R}^d$ maps the predicted history representation, into the $d$-dimensional embedding space of the observation. Overall, the prediction objective is 
\begin{align*}
   \mathbb{E}\left[ \left\lVert q\left( p\left(\Phi(h_t),a_{t:t+n-1}\right)\right) - \text{sg}\left(f(o_{t+n})\right) \right\rVert_2^2 \right].
\end{align*}
The notation $\text{sg}$ indicates stop-gradient on the prediction target. All parameterized functions in BYOL-RL $f,g,p,q$, are optimized via semi-gradient descent.

Note that there are a number of discrepancies between theory and practice, such as multi-step prediction, action-conditional prediction and partial observability. See \cref{sec:discussion} for some discussions on possible extensions of the theoretical model to the more general case. We refer readers to the original paper \citep{guo2022byol} for detailed description of the neural network architecture and hyper-parameters.

\subsection{Details on deep bidirectional self-predictive learning with BYOL-RL}

We build the deep bidirectional self-predictive learning algorithm on top of BYOL-RL. The bidirectional self-predictive learning dynamics motivates a backward prediction loss function, which we instantiate as follows in the POMDP case.

For the backward prediction, we can instantiate an observation embedding function $\tilde{f}:\mathcal{O}\rightarrow\mathbb{R}^d$ and recurrent embedding function $\tilde{g}:\mathbb{R}^k\times\mathbb{R}^d\rightarrow\mathbb{R}^k$, analogous to the forward prediction case. We also parameterize a backward latent dynamics function $\tilde{p}:\mathbb{R}^k\times (\mathcal{A})^n\rightarrow\mathbb{R}^k$. Finally, we parameterize a projection function $\tilde{q}:\mathbb{R}^k\rightarrow\mathbb{R}^d$. The overall backward prediction objective is
\begin{align*}
    \mathbb{E}\left[\left\lVert \tilde{q}\left(\tilde{p}\left(\tilde{\Phi}(h_{t+n}),a_{t:t+n-1}\right)\right) - \text{sg}\left(\tilde{f}(o_t)\right)\right\rVert_2^2\right],
\end{align*}
where the backward recurrent representation is computed recursively as $\tilde{\Phi}(h_{t-1}) = \tilde{g}\left(\tilde{\Phi}(h_t) , \tilde{f}(o_{t-1})\right)$. In other words, we can understand the backward prediction problem as almost exactly mirroring the forward prediction problem.

As a design choice, we share the observation embedding in both the forward and backward process $f=\tilde{f}$. The motivation for such a design choice is that one arguably expects the observation embedding to share many common features for both the forward and backward process, when the input observations are images; secondly, since the embedding function is usually a much larger network compared to rest of the architecture, parameter sharing helps reduce the computational cost.

\subsection{Differences from PBL \citep{guo2019efficient}}

PBL is a representation learning algorithm based on both forward and backward predictions. In the POMDP case, PBL simply parameterizes a projection function $\tilde{q}_\text{pbl}:\mathbb{R}^k\rightarrow\mathbb{R}^d$. The backward prediction loss is computed as
\begin{align*}
    \mathbb{E}\left[\left\lVert \tilde{q}_\text{pbl}\left(\tilde{f}\left(o_{t+n}\right)\right) - \text{sg}\left(\Phi(h_{t+n})\right) \right\rVert_2^2\right].
\end{align*}
In other words, the backward prediction seeks to predict the recurrent history embedding $\Phi(h_{t+n})$ from the observation embedding $\tilde{f}(o_{t+n})$. By design, the backward prediction shares the same observation embedding function as the forward prediction $\tilde{f}=f$.

\section{Transition matrix to illustrate the failure mode of self-predictive learning}
\label{appendix:double-single}

To design examples that illustrate the failure mode of single representation learning dynamics, it is useful to review the intuitive interpretations of left and right singular vectors of $P^\pi$ as clustering states with certain similar features. Left singular vectors cluster together states with similar outgoing distribution, i.e., states with similar rows in $P^\pi$. Meanwhile, right singular vectors cluster together states with similar incoming distributions, i.e., states with similar columns in $P^\pi$. 

The example transition matrix with $|\mathcal{X}|=3$ states is
\begin{align*}
   P^\pi = 
\begin{bmatrix}
    0 & 1/2 & 1/2 \\
    0 & 1/2 & 1/2 \\
    1 & 0 & 0 \\
\end{bmatrix}
\end{align*}
We can calculate the top-1 left and right singular vectors as
\begin{align*}
    u_0 = \left[-\frac{1}{\sqrt{2}}, -\frac{1}{\sqrt{2}}, 0\right], \tilde{u}_0 = \left[0, -\frac{1}{\sqrt{2}}, -\frac{1}{\sqrt{2}}\right],
\end{align*}
which concides with the previous intuition that the top left singular vector should cluster together the first two states. Indeed, by assigning a value of $-1/\sqrt{2}$ to the first two states, the top left singular vector effectively considers the first two states as being identical. Meanwhile,
the top right singular vector should cluster together the last two states.

\section{Experiment details}
\label{appendix:exp}

We provide additional details on the experimental setups in the paper.

\subsection{Tabular experiments}

Throughout, tabular experiments are carried out on randomly generated MDPs. Instead of explicitly generating the MDPs, we generate the state transition matrix $P^\pi\in\mathbb{R}^{|\mathcal{X}\times|\mathcal{X}|}$ where $|\mathcal{X}|=20$ by default. In general, the algorithm learns $k=2$ representation columns.

\paragraph{Generating random $P^\pi$.}
To generate random doubly-stochastic transition matrix, we start by randomly initialize entries of $P$ to be i.i.d. $\text{Uniform}(0,1)$. Then we carry out column normalization and row normalization 
\begin{align*}
    P_{ij} \leftarrow P_{ij} / \sum_k P_{ik} ,
    P_{ij} \leftarrow P_{ij} / \sum_k P_{kj}
\end{align*}
until convergence. It is guaranteed that $P$ has row sum and column sum to be both $1$, and is hence doubly-stochastic. The transition matrix is computed as
\begin{align*}
    P^\pi = \alpha P + (1-\alpha) P_\text{perm}, 
\end{align*}
where $P_\text{perm}$ is a randomly generated permutation matrix and $\alpha\sim \text{Uniform}(0,1)$. It is straightforward to verify $P^\pi$ is doubly-stochastic. To generate symmetric transition matrix, we follow the above procedure and apply a symmetric operation $(P^\pi + \left(P^\pi\right)^T)/2$, which produces $P^\pi$ as a symmetric transition matrix. In \cref{fig:violation}, we generate doubly-stochastic matrices as examples of non-symmetric matrices.

\paragraph{Normalized trace objective.}
When plotting trace objectives, we usually calculate the normalized objective. For symmetric matrix $P^\pi$, the normalizer is the sum of top-$k$ squared eigenvalue of $P^\pi$: let $\lambda_i$ be the eigenvalues of $P^\pi$ ordered such that $|\lambda_i|\geq|\lambda_{i+1}|$, then normalizer is $\sum_{i=1}^k |\lambda_i|^2$. For double-stochastic matrix $P^\pi$, the normalizer is the squaresum of topd ma-$k$ximum singular singular value: let $\sigma_i$ be the singular values of $P^\pi$, the normalizer is $\sum_{i=1}^k \sigma_i^2$.  Such normalzers upper bound the trace objective and SVD trace objective respectively.

\paragraph{ODE vs. discretized update.} We carry out experiments in the tabular MDPs with setups. In \cref{fig:violation,fig:double_single}, we simulate the exact ODE dynamics using the Scipy ODE solver \citep{virtanen2020scipy}. In \cref{fig:collapse,fig:finitelr}, we simluate the discretized process using a finite learning rate $\eta$ on the representation matrix $\Phi_t$. This corresponds to implementing the update rule in \cref{eq:phi-update}. By default, in discretized updates we adopt $\eta=10^{-3}$ so that the non-coallpse property is almost satisfied. 

\subsubsection{Complete result to \cref{fig:violation}}

We present the complete result to \cref{fig:violation} in \cref{fig:violation-complete}, where we display the evolution of the trace objective for a total of $100$ iterations in \cref{fig:violation-complete}(a). Note that as the simulation runs longer, in the non-symmetric MDP case, we can observe very small oscillations in the trace objective. However, overall, the trace objective improves significantly compared to the initial values. 

In \cref{fig:collapse}(b), we make more clear the individual runs across different MDPs. Though in most MDPs, the trace objective is improved significantly given $100$ iterations, there exists MDP instances where the improvement is very slow and appear highly monotonic in the non-symmetric case. In the symmetric case, there can exist MDPs where the rate of improvement for the trace objective is slow too.

\begin{figure}[t]
    \centering
    \subfigure[Complete run]{\includegraphics[keepaspectratio,width=.45\textwidth]{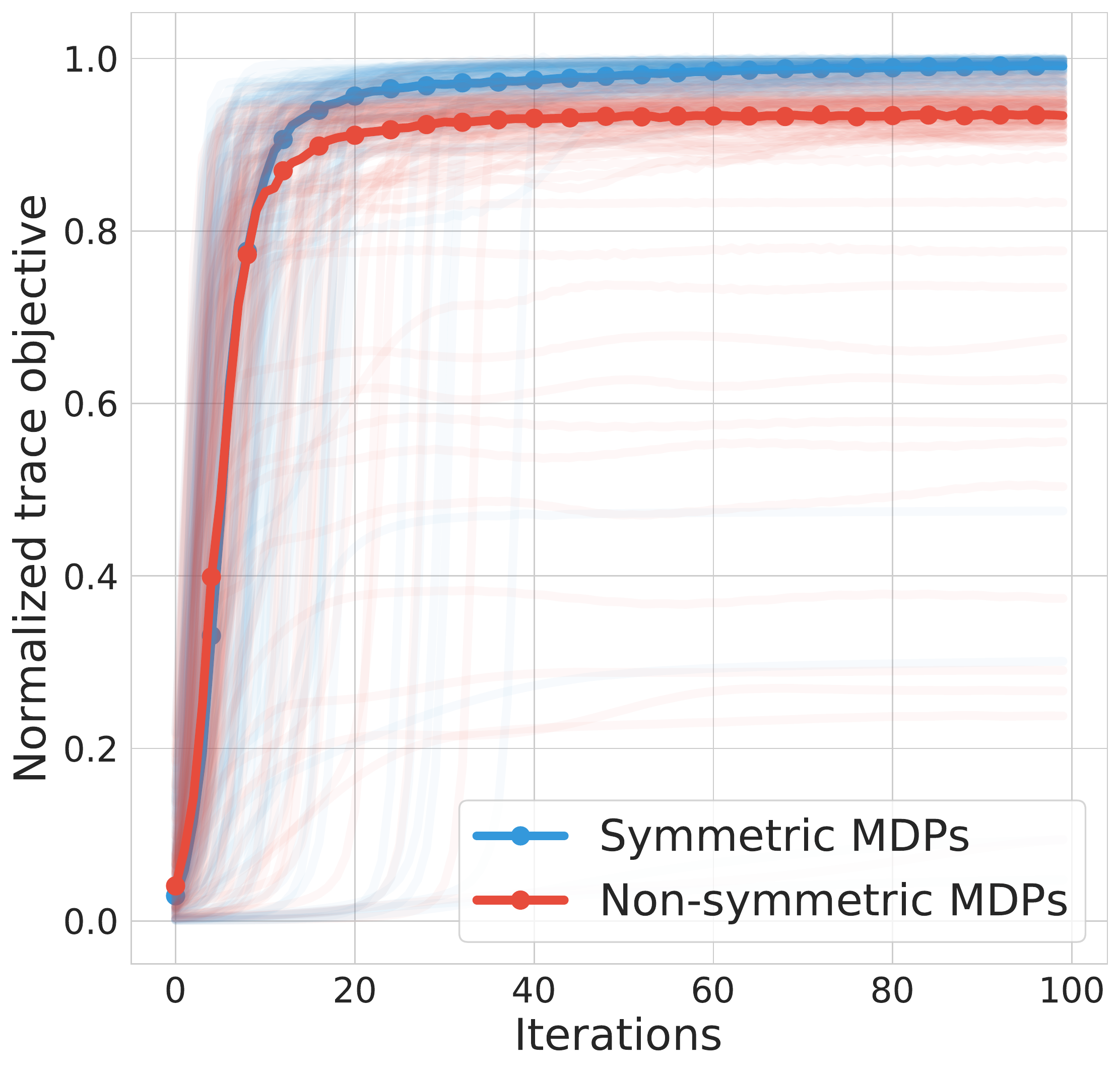}
     }
    \subfigure[Individual runs]{\includegraphics[keepaspectratio,width=.45\textwidth]{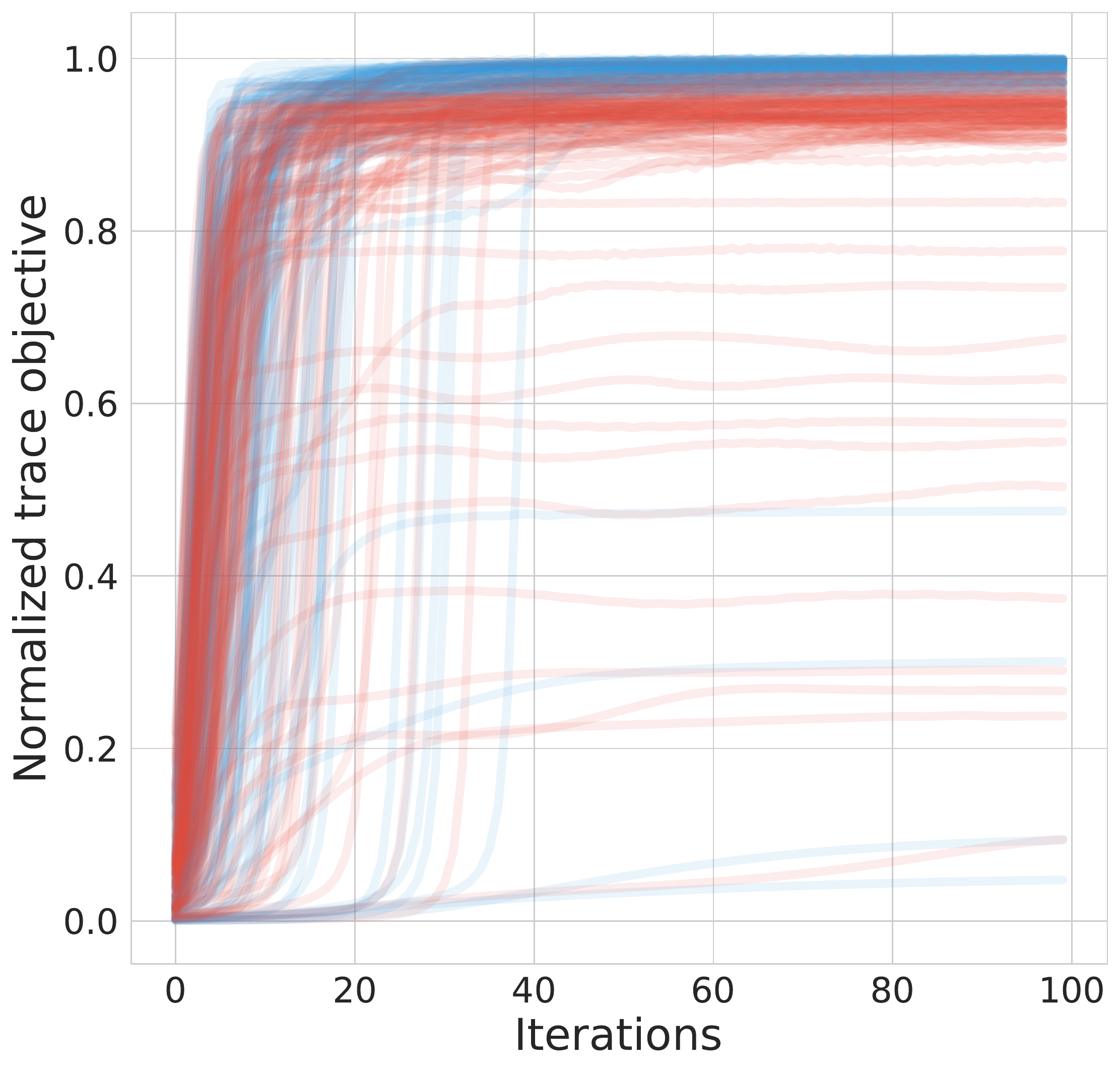}
         }
     \caption{Complete result for \cref{fig:violation} where we now show the result over a large number of iterations in (a). In (b), we make more clear the individual runs across different MDPs.   }
    \label{fig:violation-complete}
\end{figure}

\subsubsection{Effect of target network}

In practice, it is common to maintain a target network for constructing prediction targets for self-prediction \citep{guo2019efficient,schwarzer2021dataefficient,guo2020bootstrap}. Under our framework, the prediction loss function is
\begin{align*}
    \tilde{L}(\Phi,P,\Phi') = \mathbb{E}_{x\sim d, y \sim P^\pi(\cdot|x)} \left[\left\lVert P^T \Phi^Tx - (\Phi')^Ty \right\rVert_2^2\right],
\end{align*}
where $\Phi'$ is the target network, or target representation matrix. The target representation matrix is updated via moving average towards the online representation matrix
\begin{align*}
    \frac{d}{dt}\Phi_t' = \beta(\Phi_t-\Phi_t').
\end{align*}
The overall self-predictive learning dynamics with target representation is: 
\begin{align}
\begin{split} \label{eq:target-ode}
   P_t &= \arg\min_P \tilde{L}(\Phi_t,P,\Phi_t'),  \\
   \dot{\Phi}_t' &= \beta(\Phi_t-\Phi_t'),\\
   \dot{\Phi}_t &= - \nabla_{\Phi_t} \mathbb{E} \left[\left\lVert P_t^T \Phi_t^Tx - \sg\left((\Phi_t')^Ty\right) \right\rVert_2^2\right].
\end{split}
\end{align}

In \cref{fig:target}, we carry out ablation on the effect of $\beta$. We consider the symmetric MDP case where the self-predictive learning dynamics in \cref{eq:ode} should monotonically improve the trace objective. Here, we also plot the trace objective. When $\beta=0$, the result shows that the trace objective still improves compared to the initialization, though such an improvement is much more limited and can be very non-monotonic. When $\beta>0$, we see that the learning dynamics behaves very similarly to \cref{eq:ode}. 

\begin{figure}[t]
    \centering
    \includegraphics[keepaspectratio,width=.45\textwidth]{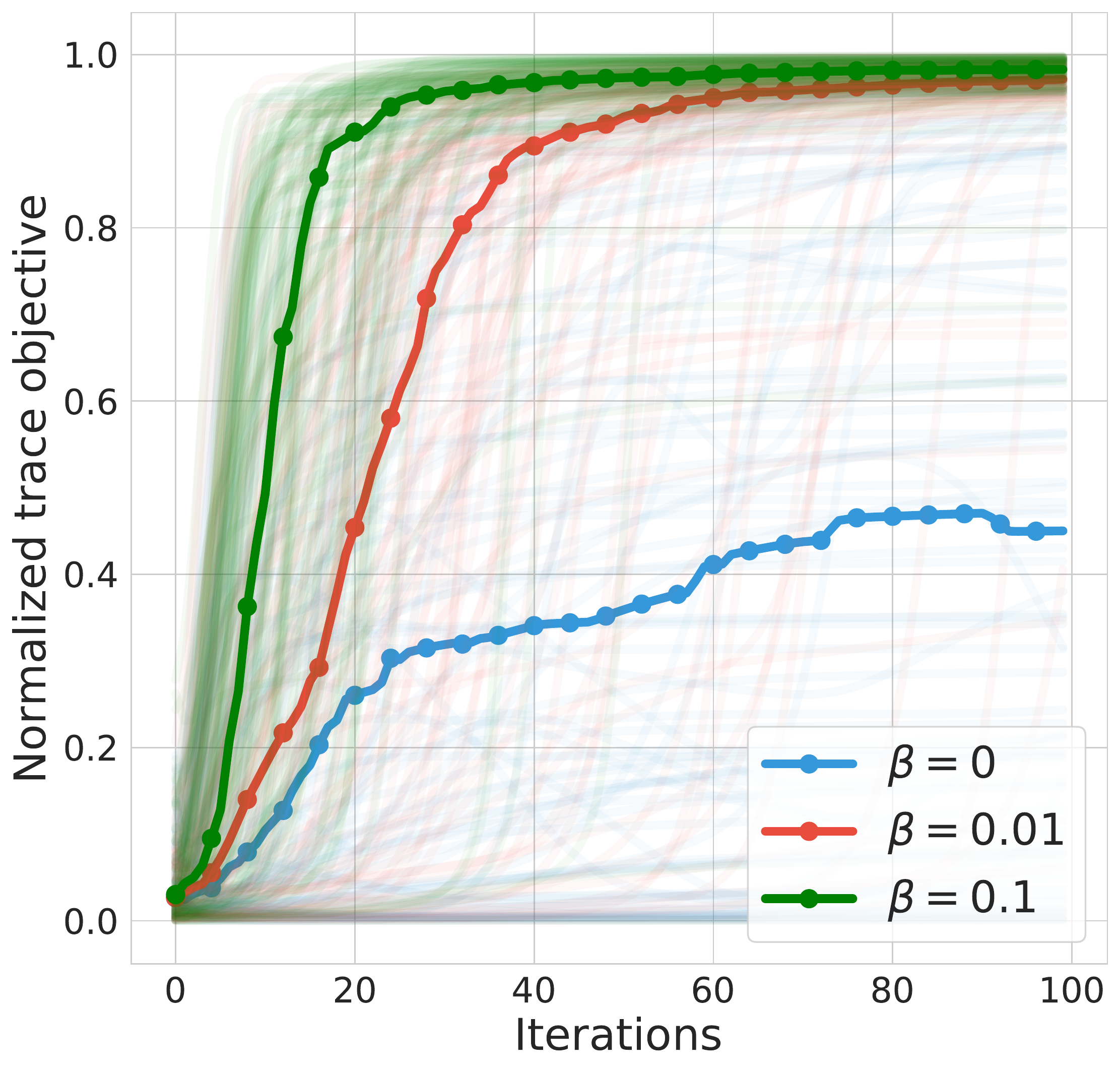}
     \caption{Impact of target representation matrix on the learning dynamics. We introduce a target representation matrix $\Phi_t'$ whose dynamics is $\frac{d}{dt}\Phi_t'=\beta(\Phi_t-\Phi_t')$, i.e., based on a moving average update towards the main representation matrix $\Phi_t$. With the target representation matrix in place, the trace objective still improves overall, but the improvement is likely to be non-monotonic.}
    \label{fig:target}
\end{figure}

\subsubsection{Ablation on how hyper-parameters impact non-collapse dynamics}

We now present results on how a number of different hyper-parameters impact the non-collapse dynamics: finite learning rate and non-optimal predictor.

\paragraph{Finite learning rate.}
Thus far, our theory has been focused on the continuous time case, which corresponds to a infinitesimally small learning rate. With finite learning rate, we expect the non-collapse to be violated. \cref{fig:finitelr}(a) shows the effect of finite learning rate on the preservation of the cosine similarity between two representation vectors $\phi_{1,t}$ and $\phi_{2,t}$. The two vectors are initialized to be orthogonal, so their cosine similarity is initialized at $0$. We consider a grid of learning rate $\eta\in\{0.01, 0.1, 1, 10\}$; to ensure fair comparison, for learning rate $\eta$, we showcase the cosine similarity $\left\langle \phi_{1,t}, \phi_{2,t}\right\rangle $ at iteration $T/\eta$ with $T=10000$. Interpreting $\eta$ as the magnitude of the step-size, this is to ensure that at each learning rate, the result is obtained after updating for a total step-size of $\eta \cdot T/\eta = T$.

As \cref{fig:finitelr}(a) shows, when the learning rate increases, the cosine similarity $\left\langle \phi_{1,t}, \phi_{2,t}\right\rangle $ increases, indicating a more severe violation of the non-collapse property. As $\left\langle \phi_{1,t}, \phi_{2,t}\right\rangle \rightarrow 1$, the two representation vectors become more and more aligned with each other, and eventually coallpse to the same direction.

\paragraph{Non-optimal predictor.} Our theory has suggested that the optimal predictor is important for the non-collapse of the self-predictive learning dynamics. To assess how sensitive the non-collapse property is to the level of \emph{imperfection} of the predictor, at each time step $t$, let $P_t^\ast$ be the optimal predictor. We set the prediction matrix as a corrupted version of the optimal prediction matrix,
\begin{align*}
    P_t = P_t^\ast + \epsilon,
\end{align*}
where $\epsilon\in\mathbb{R}^{k\times k}$ is a noise matrix whose entries are sampled i.i.d. from Gaussian distribution $\mathcal{N}(0,\sigma^2)$. Here, $\sigma$ determines the level of noise used for corrupting the predictor. This is meant to emulate a practical setup where the prediction matrix is not learned perfectly.

In \cref{fig:finitelr}(b), we show the cosine similarity between $\phi_{1,t}$ and $\phi_{2,t}$ after a fixed number of iterations $T=10000$. As the noise scale $\sigma$ increases, we observe an increasing tendency to collapse.

\begin{figure}[t]
    \centering
    \subfigure[Finite learning rate]{\includegraphics[keepaspectratio,width=.45\textwidth]{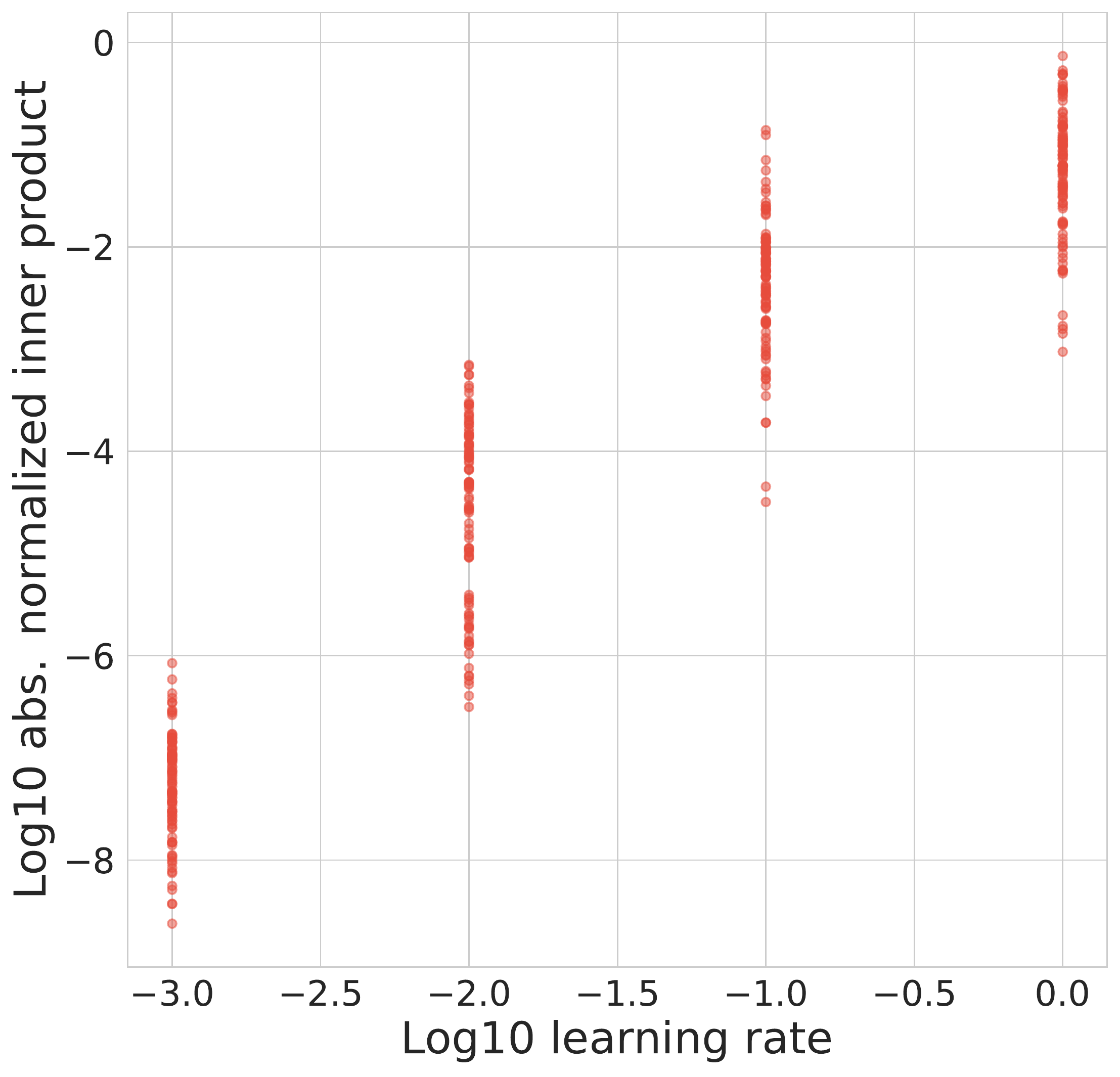}}
    \subfigure[Non-optimal predictor]{\includegraphics[keepaspectratio,width=.45\textwidth]{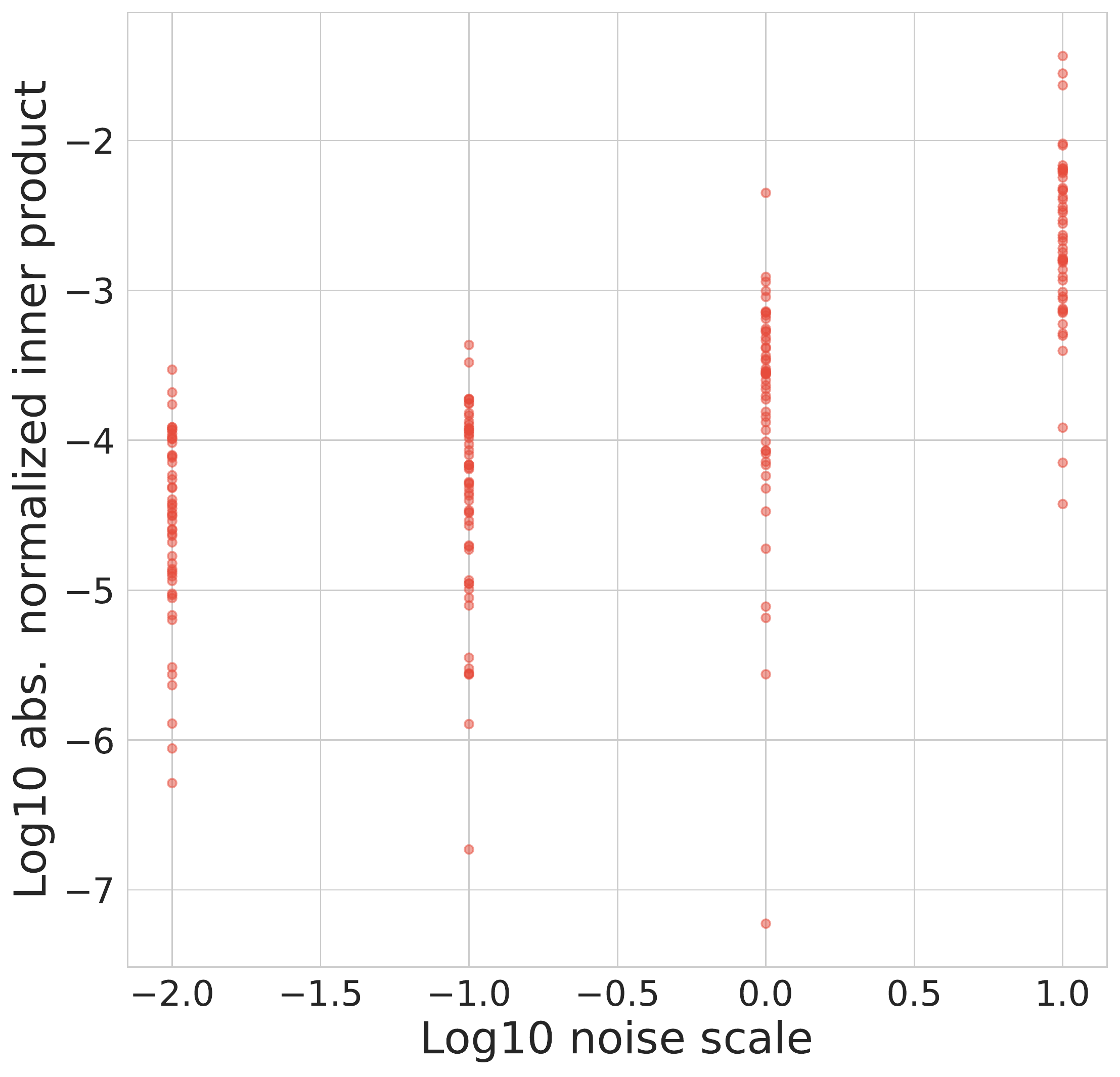}}
    \caption{Ablation experiments to assess the sensitivity of the non-collapse property to finite learning rate and non-optimal prediction matrix. Across all plots, $y$-axis shows the cosine similarity $\left\langle \phi_{1,t},\phi_{2,t}\right\rangle$ after some iterations of learning. Since the two vectors are initialized to be orthogonal, as the inner product increases from $0$ to $1$, we expect the representations to collapse to the same direction.} 
    \label{fig:finitelr}
\end{figure}

\subsection{Deep RL experiments}

We provide details on the deep RL experiments. 

We compare the deep bidirectional self-predictive learning algorithm, an algorithm inspired from the bidirectional self-predictive learning dynamics in \cref{eq:double-ode}, with BYOL-RL \citep{guo2020bootstrap}. BYOL-RL can be understood as an application of self-predictive learning dynamics in the POMDP case. BYOL-RL is built on V-MPO \citep{Song2020V-MPO:}, an actor-critic algorithm which shapes the representation using policy gradient, without explicit representation learning objectives. 
See \cref{appendix:byolrl} for a more detailed description of the BYOL-RL agent and how it is adapted to allow for bidirectional self-predictive learning. 

BYOL-RL implements a forward prediction loss function $L_\text{fwd}$, which is combined with V-MPO's RL loss function
\begin{align*}
    L_\text{BYOL-RL} = L_\text{rl} + L_\text{fwd}.
\end{align*}
The bidirectional self-predictive learning algorithm introduces a backward prediction loss function
\begin{align*}
    L_\text{bidirectional} = L_\text{rl} + L_\text{fwd} + \alpha L_\text{bwd},
\end{align*}
where $\alpha\geq 0$ is the only extra hyper-parameter we introduce. Throughout, we set $\alpha=1$ which introduces an equal weight between the forward and backward predictions, as this most strictly adheres to the theory. All hyper-parameters and architecture are shared across experiments wherever possible. We refer readers to Guo et al. \citep{guo2020bootstrap} for complete information on the network architecture and hyper-parameters.

\paragraph{Test bed.}
Our test bed is DMLab-30, a collection of 30 diverse partially observable cognitive tasks in the 3D DeepMind Lab \citep{beattie2016deepmind}. 
DMLab-30 has visual input $v_t$ to the agent, along with the agent's previous action $a_{t-1}$ and reward function $r_{t-1}$, form the observation at time $t$: $o_t=(v_t,a_{t-1},r_{t-1})$. 

To better illustrate the importance of representation learning, we consider the multi-task setup where the agent is required to solve all $30$ tasks simultaneously. In practice, at each episode, the agent uniformly samples an environment out of the $30$ tasks and generates a sequence of experience. Since the task id is not provided, the agent needs to implicitly infer the task while interacting with the sampled environment. This intuitively explains why representation learning is valuable in such a setting, as observed in prior work \citep{guo2019efficient}. 

In \cref{fig:dmlab30-pergame-imp-rl}, we compare the per-game performance of BYOL-RL with the RL baseline, measured in terms of the human normalized scores. Here, let $z_i$ be the raw score for the $i$-th game, $u_i$ the raw score of a random policy and $h_i$ the raw score of humans, then the human normalized score is calculated as $\frac{z_i-u_i}{h_i-u_i}$. Indeed, we see that BYOL-RL significantly out-performs the RL baseline across most games.

\begin{figure}[t]
    \centering
    \includegraphics[keepaspectratio,width=.45\textwidth]{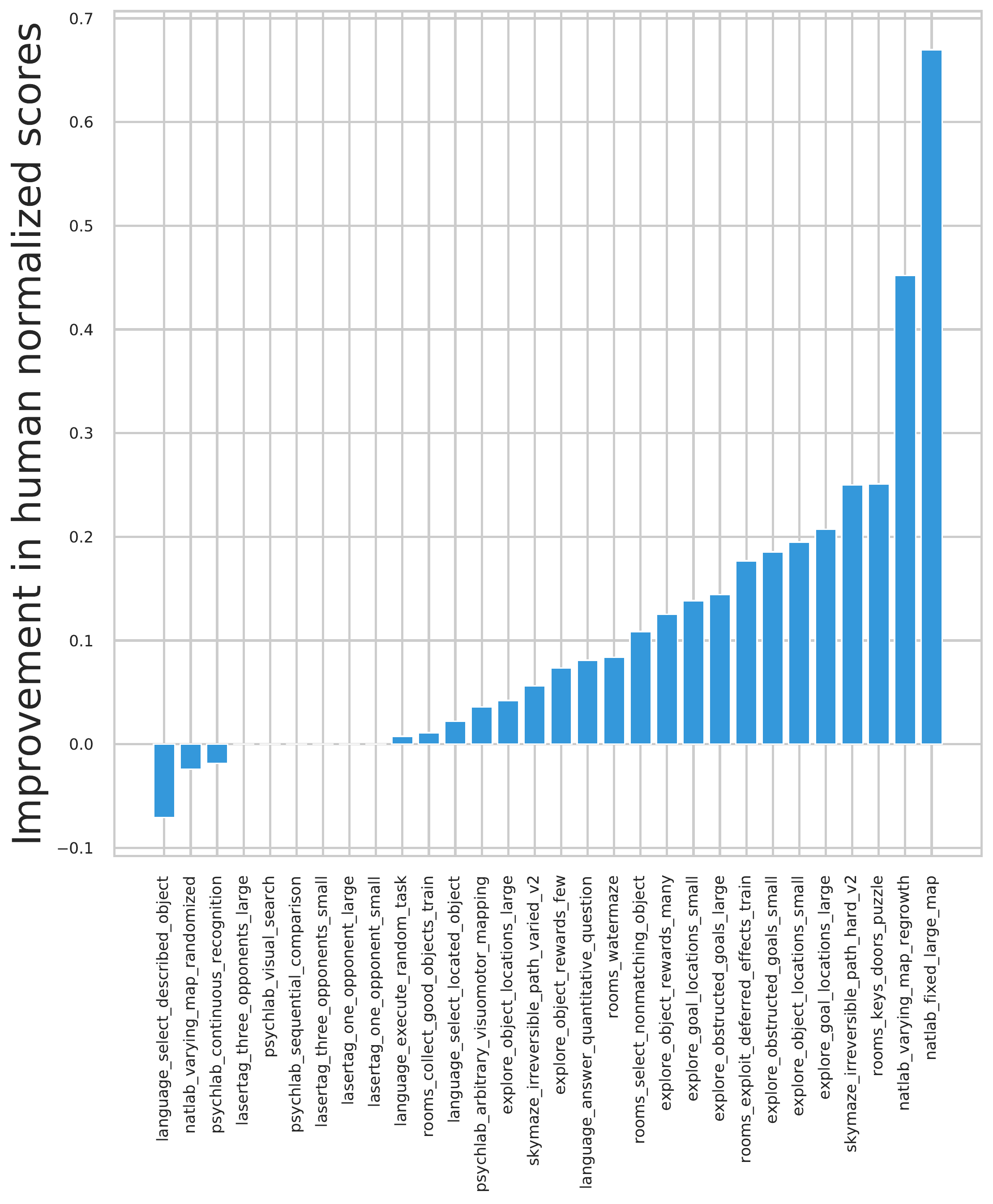}
     \caption{Per-game improvement of   BYOL-RL compared to baseline RL algorithm, in terms of mean human normalized scores averaged across $3$ seeds. The scores are obtained at the end of training. The improvement in performance is significant in most games.}
    \label{fig:dmlab30-pergame-imp-rl}
\end{figure}

\end{appendix}

\end{document}